\documentclass[runningheads]{llncs}

\usepackage{eccv}

\usepackage{eccvabbrv}
\usepackage{placeins}
\usepackage{graphicx}
\usepackage{booktabs}

\usepackage[accsupp]{axessibility}  

\usepackage{wrapfig}     
\usepackage{caption}     
\usepackage{subcaption}
\usepackage{graphicx}    
\usepackage{array}       
\usepackage{booktabs}    
\usepackage{color}

\usepackage{algorithm}   
\usepackage{algpseudocode} 
\usepackage{amsfonts}    
\usepackage{amssymb}     
\usepackage{enumitem}    
\usepackage{multirow}    
\usepackage{mathtools}   

\usepackage{url}            
\usepackage{nicefrac}       
\usepackage{microtype}      
\usepackage{xcolor}         
\usepackage{amsmath}
\usepackage{float}       
\usepackage{comment}
\usepackage{subcaption}
\usepackage{amsmath}
\usepackage{varwidth}
\usepackage{amsmath}
\usepackage{amsfonts}

\usepackage{rotating}   

\usepackage{textcomp}   
\usepackage[table]{xcolor} 

\definecolor{mygreen}{rgb}{0.13, 0.55, 0.13}
\usepackage{colortbl}
\usepackage{subcaption}
\usepackage{float}

\usepackage{hyperref}

\usepackage{orcidlink}

\begin{document}

\title{SAMPLe: A Sharpness Aware Minimization based Optimizer for Prompt Learning in Vision-Language Models} 

\titlerunning{SAMPLe: SAM-based Optimizer for Prompt Learning in VLMs}

\author{Hossein Rajoli\inst{1,2}\orcidlink{0009-0000-1847-400x}$^{\star}$ \and
Fatemeh Lotfi\inst{1}\orcidlink{0009-0009-1691-0029}$^{\star}$ \and
Niloufar Alipour\inst{1}\orcidlink{0009-0000-6881-3671} \and
Hossein Kashiani\inst{1}\orcidlink{0000-0001-8338-9987} \and
Xiaolong Ma\inst{3}\orcidlink{0000-0003-3753-7648} \and
Fatemeh Afghah\inst{1}\orcidlink{0000-0002-2315-1173}}

\authorrunning{H.~Rajoli et al.}

\institute{Clemson University, Clemson SC 29634, USA \\
\email{\{hrajoli,flotfi, nalipou, hkashia, fafghah\}@clemson.edu} \and
Siemens Energy (AI Lab), Orlando FL 32826, USA\\
\email{hossein.rajoli.nowdeh@siemens-energy.com} \and
University of Arizona, Tucson AZ 85721, USA \\
\email{xiaolongma@arizona.edu}}

\maketitle
\renewcommand\thefootnote{$\star$}
\footnotetext{Hossein Rajoli and Fatemeh Lotfi contributed equally to this work.}
\renewcommand\thefootnote{\arabic{footnote}}

\begin{abstract}
Pre-trained Vision-Language Models (VLMs) like CLIP have proven highly effective as foundation models for various downstream applications. However, prompt learning in VLMs encounters a performance-generalization dilemma: while prompts can be tuned to achieve high accuracy on seen distributions, this tuning process often undermines their generalizability to unseen data. The limited set of learnable prompts, which contextualize and condition the input to steer it toward the task within the pretrained VLM, tends to overfit the training data, leading to a trade-off between task-specific performance and preserving generalization. To address this dilemma, we introduce SAMPLe (Sharpness-Aware Minimization Prompt Learning), a plug-in sharpness-aware optimizer that enhances prompt generalizability by accounting for loss landscape sharpness. Unlike conventional methods, SAMPLe balances exploration and exploitation by satisfying objective function constraints at each step, dynamically adapting to the current optimization state based on the local curvature and gradient properties. This approach reduces overfitting on seen distributions and improves adaptability to unseen data, preserving the generalization potential of pre-trained VLM models. We integrate SAMPLe into multiple prompt learning frameworks, including CoOp, CoCoOp, MaPLe, TCP, and Co-Prompt, demonstrating its effectiveness across diverse methods. 
Experiments show that SAMPLe elevates prompt learning frameworks and consistently outperforms existing optimizers across diverse settings, establishing itself as a robust, model-agnostic solution for prompt learning.\vspace{-0.4cm}
\end{abstract}    
\section{Introduction}
\label{sec:intro}

\begin{figure}[t]
    \centering
    \vspace{0cm}

    \begin{minipage}{0.22\textwidth}
        \includegraphics[width=\linewidth]{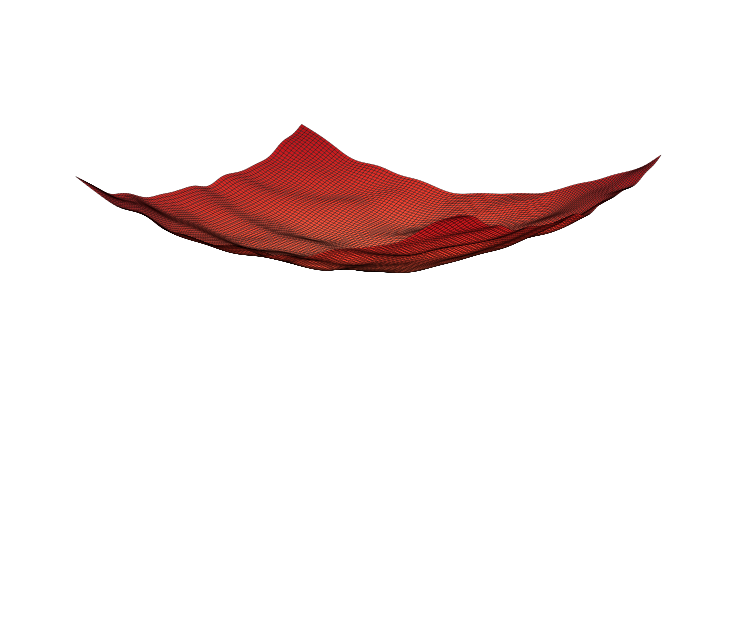}\vspace{-0.9cm}\\
        \centering \scriptsize (a) FSAM
    \end{minipage}
    \hspace{0.02\textwidth}
    \begin{minipage}{0.22\textwidth}
        \includegraphics[width=\linewidth]{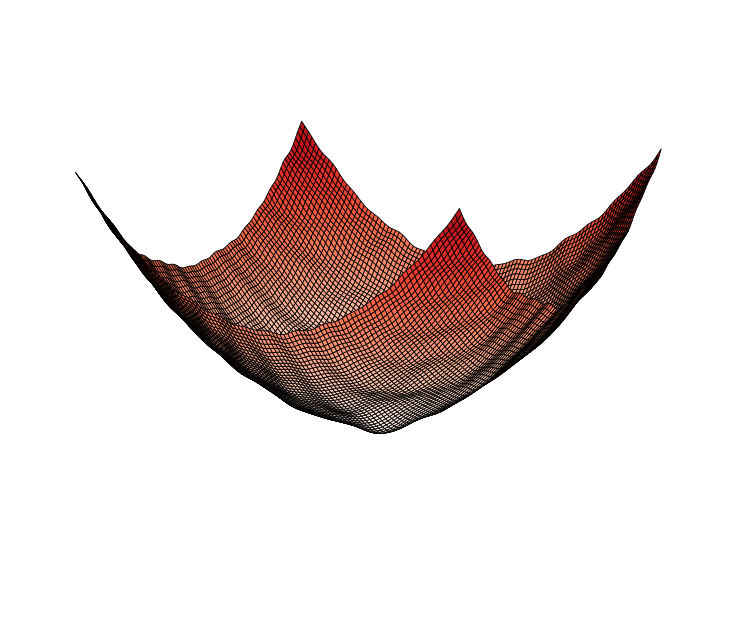} \vspace{-0.8cm}\\
        \centering \scriptsize (b) SAGM
    \end{minipage}
    \hspace{0.02\textwidth}
    \begin{minipage}{0.22\textwidth}
        \includegraphics[width=\linewidth]{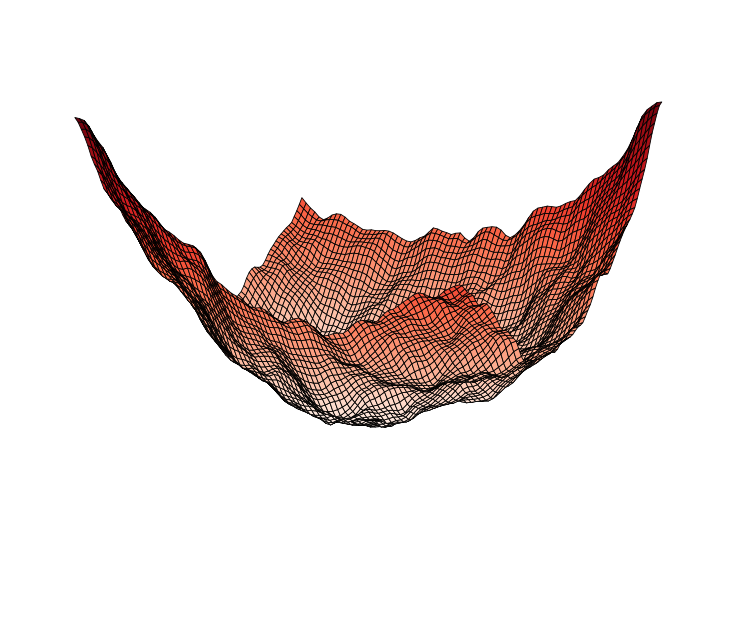}\vspace{-0.8cm}\\
        \centering \scriptsize (c) SAMPLe
    \end{minipage}

    \caption{\small Normalized loss landscapes of CoOp\cite{zhou2022conditional} on ImageNet using F-SAM\cite{li2024friendly}, SAGM\cite{wang2023sharpness}, and the proposed SAMPLe, each scaled by the maximum absolute loss (among FSAM, SAGM, and SAMPLe) preserving relative depth and sharpness. The visualization demonstrates SAMPLe’s effectiveness in achieving both flatter minima and lower empirical risk.}
    \label{fig:loss_landscape}
    \vspace{-10pt}
\end{figure}

Vision-Language Models (VLMs) such as CLIP have become essential for a wide range of visual tasks due to their robust multimodal understanding, leveraging both vision and text to achieve notable zero-shot performance across domains~\cite{radford2021learning,jia2021scaling}. For downstream applications, prompt learning has emerged as an efficient alternative to traditional fine-tuning by introducing learnable prompts that adapt VLMs to specific tasks while keeping their parameters fixed~\cite{zhou2022learning,zhou2022conditional}. However, a core challenge persists in balancing high task-specific accuracy while retaining generalization to unseen classes~\cite{dziugaite2017computing,keskar2016large} under the constraint of a limited number of learnable parameters.

This restriction amplifies sensitivity in optimization, since updates are confined to a narrow parameter subspace of the learnable prompts. Authors in ~\cite{novak2018sensitivity} showed that smaller capacity models often converge to solutions that are less robust to perturbations, leading to weaker generalization. On the other hand, \cite{keskar2016large} further linked such generalization gaps to sharp minima characterized by large positive eigenvalues of the Hessian, while \cite{ishida2020need} emphasized that even with low training error, a large train–test gap persists under sharp solutions. These findings highlight that the limited learnable space in prompt learning makes models especially vulnerable to sharp, high-curvature minima in the loss landscape.
Although various techniques \cite{yao2023visual, zhou2022conditional, zhou2022learning, khattak2023maple, zhu2023prompt} have been proposed to preserve the generalizability of VLMs while adapting them to downstream tasks, just \cite{10814656} considered role of gradient manipulation with respect to the loss landscape. 

Incorporating model-agnostic loss landscape awareness into VLM prompt learning can substantially enhance generalization. By guiding optimization toward flatter and more stable minima, the model can produce prompts that are both robust and transferable across tasks. However, because learnable prompts involve only a small number of tunable parameters, they are especially prone to converging to sharp, non-robust minima. While flatter landscapes are crucial for generalization, prior work shows they may come at the cost of reduced accuracy on seen distributions~\cite{wang2023sharpness,zhuang2022surrogate}. Thus, achieving effective prompts requires carefully balancing the minimization of training loss (exploitation) with the search for sufficiently flat regions of the landscape (exploration), ensuring strong performance on both seen and unseen classes. 

To address these challenges, we propose SAMPLe (Sharpness-Aware Minimization Prompt Learning), a framework specifically designed to incorporate sharpness-aware minimization (SAM) into prompt learning. SAMPLe directly targets the trade-off between minimizing loss values and maintaining a flat loss landscape, ensuring both accuracy and generalization. It introduces three essential strategies to achieve this balance. First, SAMPLe optimizes for a loss minimum that is sufficiently low in training loss and flat in the loss landscape, providing both stability and robustness. Second, it ensures that SAM gradient updates, computed at the deviated point (see Sec.\ref{sec:SAMPLe}), are coherently aligned with empirical risk minimization (ERM) gradients, as illustrated in Fig.\ref{fig:sample_comparison}a and Fig.\ref{fig:sample_comparison}b. This alignment addresses the instability caused by the limited parameter space of prompts. Finally, SAMPLe constrains the SAM gradient updates at the perturbed point to be orthogonal to the full-batch gradient (refer to Sec.\ref{sec:full-batch}, Sec.\ref{sec:SAMPLe}), effectively aligning it with a relaxed version of the ERM gradient that seeks the flattest minima. Satisfying both constraints of alignment to the ERM gradient and its relaxed version in each iteration is the central objective of the SAMPLe optimizer, ensuring a precise adaptive balance between exploration and exploitation.

These dual objectives are critical for effective prompt learning. However, sharpness-aware minimization optimizers, including SAM, F-SAM\cite{li2024friendly}, and SAGM\cite{wang2023sharpness} (see Fig.\ref{fig:sample_comparison}a and Fig.\ref{fig:comparison}), fail to balance both conditions. Beyond these optimizers, a recently published study \cite{10814656} proposed GCSCoOP, a SAM-based method designed to address the generalization problem by considering both loss value and loss sharpness. However, it suffers from limited scalability, as it relies on a heuristic gradient computation with two hyperparameters ($\beta_1$, $\beta_2$). In contrast, SAMPLe updates gradients by automatically balancing exploitation and exploration constraints considering the current state of optimization. 

The key contributions of this work can be summarized as follows:
\begin{itemize}
    \item  We introduce SAMPLe, a SAM-based optimizer specifically designed for VLM prompt learning. It comprehensively defines a dual-objective optimization framework that adaptively balances exploiting low ERM loss and exploring the flattest minima at every iteration.
    
    \item We analyze the limitations of existing sharpness-aware optimization methods in VLM prompt learning and establish the necessity of an adaptive dual-objective approach, motivating the design of SAMPLe.

    \item We demonstrate the superior performance of SAMPLe over SOTA in VLM prompt learning and provide an ablation study comparing it with related sharpness-aware optimizers.
  
\end{itemize}

\begin{figure*}[h!]
    \centering
    
    \begin{minipage}{0.24\textwidth}
        \centering
        \includegraphics[width=\linewidth]{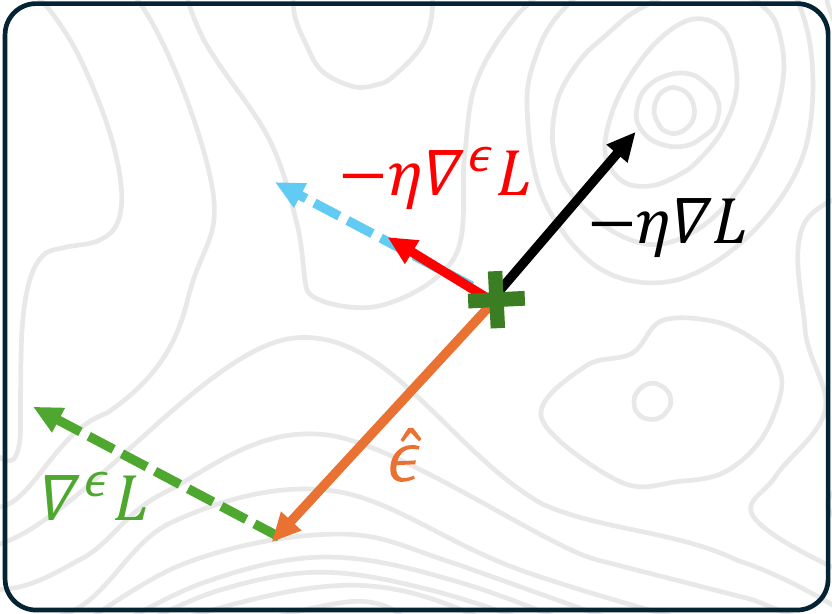}
        \caption*{\fontsize{7pt}{7pt}\selectfont{{(a) SAM across all stages of optimization.}}} 
    \end{minipage}
    \hfill
    \begin{minipage}{0.24\textwidth}
        \centering
        \includegraphics[width=\linewidth]{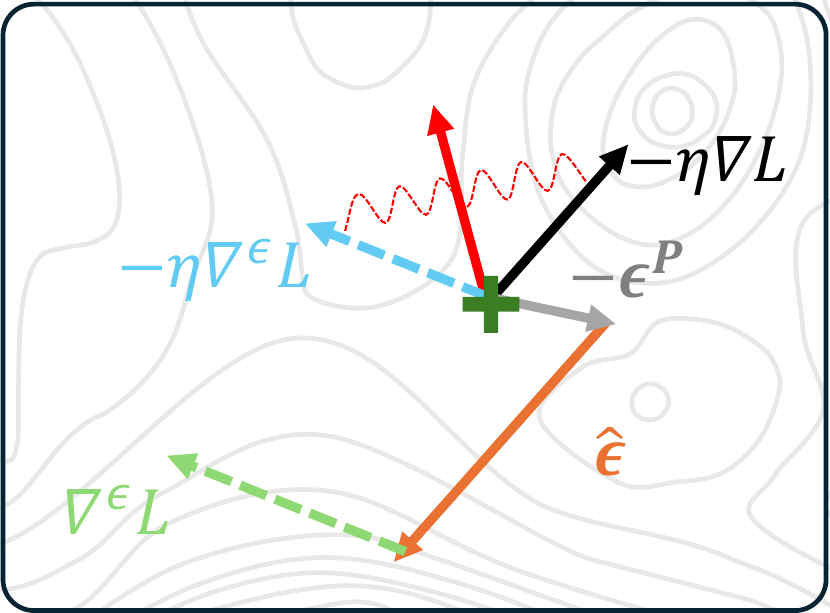}
        \caption*{\fontsize{7pt}{7pt}\selectfont{{(b) SAMPLe's early stages of optimization.}}}
    \end{minipage}
    \hfill
    \begin{minipage}{0.24\textwidth}
        \centering
        \includegraphics[width=\linewidth]{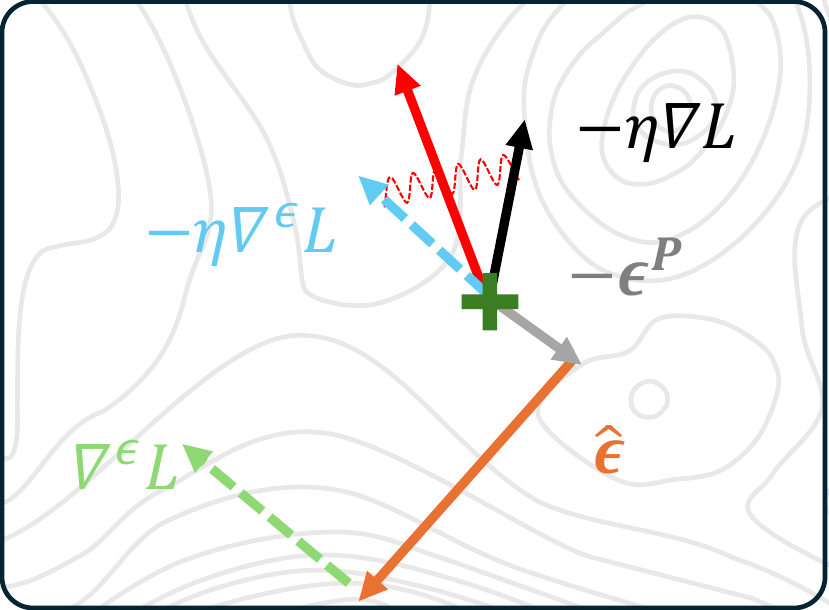}
        \caption*{\fontsize{6.75pt}{6.75pt}\selectfont{{(c) SAMPLe's middle stage of optimization.}}}
    \end{minipage}
    \hfill
    \begin{minipage}{0.24\textwidth}
        \centering
        \includegraphics[width=\linewidth]{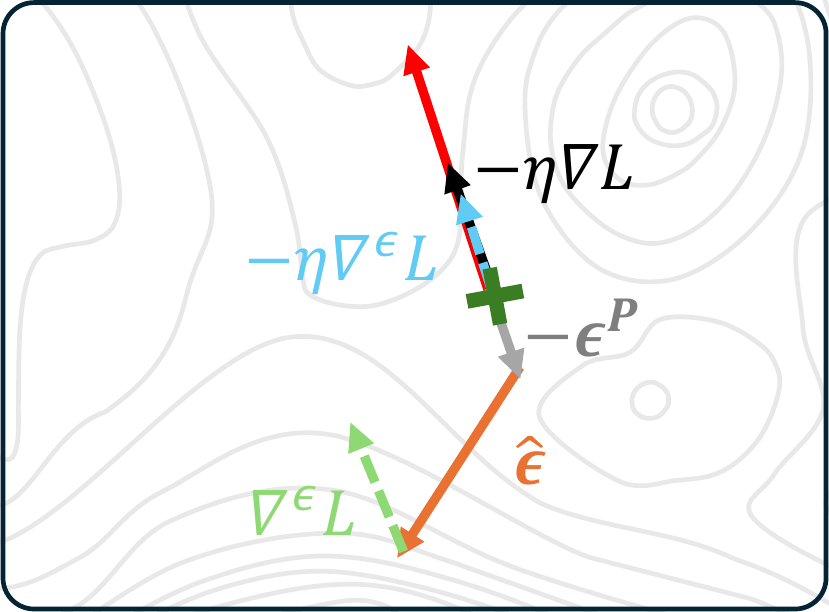}
        \caption*{\fontsize{7pt}{7pt}\selectfont{{(d) SAMPLe's last stages of optimization.}}}
    \end{minipage}
    \begin{minipage}{\textwidth}
        \centering
        \includegraphics[width=0.99\linewidth]{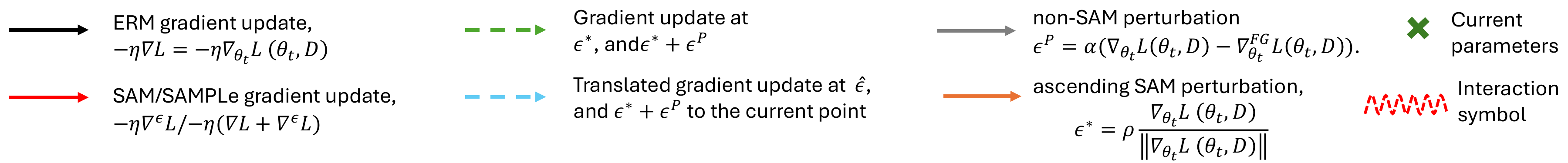}
    \end{minipage}
    \vspace{-0.3cm}
    \caption{\textbf{SAMPLe vs. vanilla SAM:} Unlike SAM, which applies uniform gradient updates, SAMPLe dynamically adjusts gradients across stages, promoting smoother optimization in early stages and robust convergence in later stages, resulting in improved generalization and performance.}
    \label{fig:sample_comparison}
\end{figure*}

\section{Related Work }
\label{sec:rel_wrk}
\vspace{-0.3cm}

\textbf{Prompt learning} is a powerful approach for adapting large pre-trained VLMs like CLIP \cite{radford2021learning}, ALIGN \cite{jia2021scaling}, LiT \cite{zhai2022lit}, and FILIP \cite{yao2021filip} to downstream tasks. These models, trained on large image-text datasets using contrastive learning \cite{chen2020simple}, excel at general representation learning \cite{zhang2024vision}, but adapting them to specific tasks, especially in low-data settings like few-shot learning \cite{zhou2022conditional}, remains challenging. Traditional fine-tuning methods are computationally expensive and prone to overfitting \cite{khattak2023self}. Prompt learning addresses these issues by introducing learnable prompts, enabling efficient task adaptation without retraining the entire model. CoOp \cite{zhou2022learning} fine-tunes continuous text prompts for few-shot recognition but struggles with unseen classes. CoCoOp \cite{zhou2022conditional} improves zero-shot generalization by conditioning prompts on image features, while MaPLe \cite{khattak2023self} optimizes prompts across both vision and language branches. Methods like PromptSRC \cite{khattak2023self} and ProDA \cite{lu2022prompt} further enhance generalization using regularization and diverse task-related representations. Recent works extend this line with textual-based class-aware prompt tuning, TCP \cite{yao2024tcp}), decoupled embedding parameterization, DEPT \cite{iacob2025dept}), overfitting-aware prompt regularization, LOBG \cite{ding2024lobg}), and diversity covariance-aware prompt learning \cite{zhou2025diversity}.

\noindent\textbf{SAM} is an effective optimization framework designed to improve model generalization by finding flatter minima in the loss landscape \cite{foret2021sharpnessaware}. SAM has been successfully applied in language modeling \cite{bahri2021sharpness}, fluid dynamics \cite{jetly2022splash}, medical imaging \cite{yeo2008effects}, multimodal learning \cite{rajoli2026modality}, and Reinforcement Learning (RL) \cite{lotfi2025task}. Building on the original framework, several extensions have refined SAM’s adaptability. F-SAM \cite{li2024friendly} utelizes a less aggressive perturbation that leads to better generalization, while SAGM \cite{wang2023sharpness} enhances gradient alignment to balance loss minimization and flatness. GCSCoOP\cite{10814656}, a recent SAM-based approach, aims to address generalization by considering both loss sharpness and value but remains dependent on manually set hyperparameters. Adaptive SAM (ASAM) adjusts the sharpness radius dynamically \cite{kwon2021asam}, GSAM simplifies sharpness measurement using surrogate gap calculations \cite{zhuang2022surrogate}, and Fisher SAM leverages Fisher information for sharper neighborhood estimations \cite{kim2022fisher}. GAM further improves generalization by focusing on the maximal gradient norm \cite{zhang2023gradient}.
In vision tasks, SAM has demonstrated notable success in models like Vision Transformers and MLP-Mixers \cite{chen2021vision}, enhancing both accuracy and robustness. However, SAM’s potential remains insufficiently explored in multi-modal learning. \vspace{-0.5cm}
\section{Preliminaries}\vspace{-0.3cm}
\label{sec:prlm}

\noindent\textbf{Prompt Learning in VLM:}  
Prompt learning aims to enhance the adaptability of VLMs by learning task-specific prompts. This approach involves introducing tunable token vectors $\mathbf{v} = \{ v_i \}_{i=1}^{N}$, where the text inputs for each class are represented as $t_m = {v_1, \dots, v_N, c_m}$, with $c_m$ denoting the class name and where $N+1$ defines number of tokens for text input. These token vectors are optimized using cross-entropy loss $\mathcal{L}_{CE}$, while keeping the parameters of the underlying VLM model, such as CLIP, frozen: 
\begin{equation} 
    {L}_{CE}(\mathbf{v}) = - \sum_{m} y_m \log p(m|x), \label{eq} 
\end{equation} 
\begin{equation} 
    p(m|x) = \frac{\exp(\text{cos}(\mathcal{I}(x), \mathcal{T}(t_m)) / \tau)}{\sum_{j=1}^M \exp(\text{cos}(\mathcal{I}(x), \mathcal{T}(t_j)) / \tau)}.
    \label{eq} 
\end{equation} 
The learned token vectors enable the model to adapt to specific tasks by refining how class names are represented as input to the model, improving the prompt's alignment with task requirements.

\noindent\textbf{Sharpness-aware Prompt Learning:}
SAM is a recently proposed optimization framework designed to enhance generalization by avoiding sharp minima in the loss landscape~\cite{foret2020sharpness}.

Consider a deep neural network $f(x; \theta)$ parameterized by $\theta \in \mathbb{R}^d$, with a loss function $\ell(f(x; \theta), y)$ that measures the discrepancy between the model’s prediction $f(x; \theta)$ and the true label $y$. For a dataset $\mathcal{D} = {(x_i, y_i)}_{i=1}^N$, the empirical loss over the dataset is given by:
\begin{equation}
    {L}(\theta; \mathcal{D}) = \frac{1}{N} \sum_{i=1}^N \ell(f(x_i; \theta), y_i).
    \label{eq:empirical_loss}
\end{equation}

Optimization methods like stochastic gradient descent (SGD) often converge to sharp minima that generalize poorly. SAM mitigates this by solving the following min-max optimization problem:
\begin{equation}
    \min_{\theta} \max_{\|\epsilon\|_2 \leq \rho} {L}(\theta + \epsilon; \mathcal{D}),
    \label{eq:minmax_sam}
\end{equation}
where $\epsilon \in \mathbb{R}^d$ is a perturbation constrained by $\|\epsilon\|_2 \leq \rho$, with $\rho$ controlling the radius of the perturbation around the parameter $\theta$. 
Because solving the inner maximization directly is computationally expensive, SAM uses a first-order Taylor expansion to approximate the perturbation $\epsilon^\star$:
\begin{equation}
    \epsilon^\star \approx \rho \cdot \frac{\nabla {L}(\theta; \mathcal{D})}{\|\nabla {L}(\theta; \mathcal{D})\|_2}.
    \label{eq:sam_perturbation}
\end{equation}
The network parameters are updated as follows:
\begin{equation}
    \theta_{t+1} \leftarrow \theta_{t} - \eta \cdot \nabla_{} \mathcal{L}(\theta_{t} + \epsilon^\star; \mathcal{D})
    \label{eq:sam_update_rule},
\end{equation}

\noindent where $\eta$ is the learning rate, and $\epsilon^\star$ is the perturbation that guides the model towards flatter minima in Fig.~\ref{fig:sample_comparison}a.

The limited number of learnable parameters in prompt learning often makes traditional approaches susceptible to overfitting or getting trapped in sharp loss surfaces \cite{zhu2023prompt}. SAM addresses this challenge by guiding the optimization process toward flatter minima in the loss landscape, enhancing the model’s ability to generalize to unseen distributions and domains. By integrating SAM into the VLM prompt learning frameworks, we effectively improve the model's generalization across unseen domains while maintaining good performance on the training dataset.

\subsection{Full-Batch Bias Hinders SAM}
\label{sec:full-batch}
One of the core ideas behind SAM is its perturbation vector, $\epsilon$, which plays a crucial role in guiding the model toward flatter minima, thereby enhancing generalization. When applied effectively, the perturbation vector steers the optimization process toward regions of the loss landscape that are less sensitive to sharp parameter changes. However, when the perturbation vector is computed as if all training samples were processed in a single batch (i.e., a full-batch perturbation), it can diminish the natural stochasticity of a mini batch based approach. This stochasticity is crucial for balancing exploration across the loss landscape. As noted by \cite{li2024friendly}, this lack of stochastic variability in full-batch perturbation can limit generalization, even resulting in worse performance compared to SGD in some cases.

While the full gradient provides a global view of the entire dataset, this global smoothness often comes at the expense of exploration, particularly when the perturbation $\epsilon$ is excessively aligned with the full gradient. This alignment suppresses the stochastic variability introduced by mini-batch gradients, leading to over-smoothing during training. As a result, the model may converge to sharper minima, reducing generalization. This phenomenon can cause SAM to underperform even compared to traditional optimization methods like SGD \cite{li2024friendly}. The full-gradient perturbation in the SAM context captures the overall trend of the loss function across the entire dataset. The projection of the mini-batch gradient onto the full batch gradient demonstrated in Fig.~\ref{fig:projection} (in Appendix) can be formulated as: 

\begin{equation}
    \begin{aligned}
\text{Proj}_{\nabla^{\mathcal{F}}_{} {L}(\theta; \mathcal{D})}  \nabla_{} {L}_{}(\theta; \mathcal{D}) = 
    \frac{\|\nabla {L}_{}(\theta; \mathcal{D})\|_2}{\|\nabla^{\mathcal{F}}_{} {L}_{}(\theta; \mathcal{D})\|_2}
    \cos\big(\nabla^{\mathcal{F}}_{} {L}(\theta; \mathcal{D}), \nabla_{} {L}_{}(\theta; \mathcal{D})\big) \nabla_{} {L}^{\mathcal{F}}_{\mathcal{}}(\theta; \mathcal{D}),
    \end{aligned}
    \label{eq:fg_proj}
\end{equation}
where $\cos(\cdot)$ represents the cosine similarity between the mini-batch gradient, $\nabla_{} {L}_{}(\theta; \mathcal{D})$, and the full batch gradient vector, $\nabla^{\mathcal{F}}_{} {L}(\theta; \mathcal{D})$. The batch-specific gradient is then defined as:

\begin{equation}
    \nabla^{\mathcal{B}}_{} {L}(\theta; \mathcal{D}) = \nabla_{} {L}_{}(\theta; \mathcal{D}) - \sigma ~ \xi ~ \nabla^{\mathcal{F}}_{} {L}(\theta; \mathcal{D}),
    \label{eq:B_proj}
\end{equation}

\noindent where $\xi$ and $\sigma = \frac{\|\nabla{L}_{}(\theta; \mathcal{D})\|_2}{\|\nabla^{\mathcal{F}}_{} {L}_{}(\theta; \mathcal{D})\|_2}$ are cosine similarity operator and normalization factor, respectively. This batch-specific gradient represents the component of the mini-batch gradient orthogonal to the full batch gradient. 
Due to the computational overhead of computing the full gradient over the entire dataset, it is typically approximated using an exponentially moving average (EMA) of the mini-batch gradients:

\begin{equation}
    m_t = \lambda m_{t-1} + (1 - \lambda) \nabla_{} {L}(\theta; \mathcal{D}),
    \label{eq:EMA}
\end{equation}

where $m_t$ approximates $\nabla^{\mathcal{F}}_{} {L}(\theta; \mathcal{D})$ if $t$ is large enough, and $\lambda$ controls the influence of past gradients \cite{li2024friendly}.
The full-gradient component biases the perturbation vector $\epsilon$ based on the entire dataset’s loss landscape, leading to smoother updates but reducing the exploration necessary for finding flatter minima. This smoothing effect can hinder SAM’s generalization performance by diminishing the stochastic variability introduced by mini-batch gradients \cite{li2024friendly}. 

\noindent Therefore, balancing the stability offered by full-batch gradient updates with the exploration provided by batch-specific gradients is essential for maintaining SAM's effectiveness. \vspace{-0.3cm}
\section{SAMPLe}
\label{sec:SAMPLe} \vspace{-0.2cm}
Given the challenges of VLM prompt learning, we introduce two pivotal challenges to ensure that the model performs robustly across both seen and unseen distributions: (i) \textbf{\textit{Optimal minima}:} The learned prompts must not only identify a flat minimum in the loss surface but also ensure that this minimum corresponds to a sufficiently low loss value, guaranteeing well-optimized performance on the training samples. (ii) \textbf{\textit{Generalization to unseen domains}:} While prompts may perform well on the training data, it is critical to ensure they generalize effectively to unseen distributions. This challenge is amplified by the significantly smaller number of parameters in prompts compared to overparameterized neural networks, making it more challenging to train models that generalize across diverse domains. To address this, the optimization strategy enforced by the objective function must inherently balance exploitation and exploration, dynamically adapting to the current state of the optimization. This ensures that the learning process acquires network parameters capable of achieving both a sufficiently low loss on the training data and robust generalization to unseen domains and distributions. Inspired by \cite{wang2023sharpness} and designed to satisfy these two conditions simultaneously, we propose the SAMPLe objective function to optimize the training of learnable prompts as follows:
\begin{equation}
    \begin{aligned}
        \min_{\theta} \mathcal{L}(\theta; \mathcal{D})~ =
        &\min_{\theta} \Big[{L}_{}(\theta; \mathcal{D}) + {L}_{}\left(\theta + {\epsilon}^{\star} - \alpha (\nabla^{\mathcal{B}}_{} \mathcal{L}(\theta; \mathcal{D}))\right)\Big] =\\
        &\min_{\theta} ~ \Big[{L}_{}(\theta; \mathcal{D}) +
         {L}_{}\left(\theta + {\epsilon}^{\star} - \alpha (\nabla {L}_{}(\theta; \mathcal{D}) - \xi ~ \sigma ~ \nabla^{\mathcal{F}} {L}(\theta; \mathcal{D})); \mathcal{D}\right)\Big],
    \end{aligned}
    \label{eq:obj_fn_1}
\end{equation}
where the optimal perturbation is defined by Eq.~\ref{eq:sam_perturbation} as ${\epsilon}^{\star} = \rho \cdot \frac{\nabla {L}_{}(\theta; \mathcal{D})}{\|\nabla {L}_{}(\theta; \mathcal{D})\|_2}$.
\vspace{-0.2cm}
\subsection{SAMPLe Analysis and Algorithm:}
\label{sec:SAMPLe_analysis}\vspace{-0.2cm}
This section presents an analysis to clarify the proposed SAMPLE method. Applying the Taylor expansion reveals that using $\nabla^{\mathcal{B}} L(\theta; \mathcal{D})$ in the second term of the loss function implies two critical conditions that enhance the learning process: \vspace{-0.2cm}
\begin{equation}
    \begin{aligned}
        &\min_{\theta} {L}_{p}\left(\theta - \alpha \nabla^{\mathcal{B}}_{} {L}(\theta; \mathcal{D})\right)
        = \\
        &\min_{\theta} \Big[{L}_{p}(\theta; \mathcal{D}) - \alpha \big(\nabla_{} {L}_{p}(\theta; \mathcal{D}) \cdot \nabla_{} {L}_{}(\theta; \mathcal{D})\big)~+ \\
        & \alpha \xi \sigma \big(\nabla_{} {L}_{p}(\theta; \mathcal{D}) \cdot \nabla_{}^{\mathcal{F}} {L}(\theta; \mathcal{D})\big) + R(\nabla_{}^{\mathcal{B}} {L}(\theta; \mathcal{D})^2) \Big]\approx \\
        &\min_{\theta} \Big[{L}_{p}(\theta; \mathcal{D}) - \alpha \nabla_{} {L}_{p}(\theta; \mathcal{D}) \cdot \nabla_{} {L}_{}(\theta; \mathcal{D}) +
        \alpha \xi \sigma \nabla_{} {L}_{p}(\theta; \mathcal{D}) \cdot \nabla_{}^{\mathcal{F}} {L}(\theta; \mathcal{D})\Big]
    \end{aligned}
    \label{eq:taylor_expansion_1}
\end{equation}

\vspace{-0.3cm}
where $L_P$, represents the loss at the perturbed point in parameter space;

\begin{equation}
    \begin{aligned}
        \min_{\theta} \Big[&{L}(\theta; \mathcal{D}) + {L}_{p}(\theta; \mathcal{D}) - 
         \alpha \big(\nabla_{} {L}_{p}(\theta; \mathcal{D}) \cdot \nabla_{} {L}_{}(\theta; \mathcal{D})\big)~+\\
        &  \alpha \xi \sigma \big(\nabla_{} {L}_{p}(\theta; \mathcal{D}) \cdot \nabla_{}^{\mathcal{F}} {L}(\theta; \mathcal{D})\big)\Big].\vspace{-0.2cm}
    \end{aligned}
    \label{eq:aligned_sam_objective}
\end{equation}

\noindent The objective function can be interpreted as follows:\\
\textbf{First}, minimizing the loss at the current point ($L$) and the perturbed neighboring point ($L_p$).  
\textbf{Second}, considering the alignment between the gradient at the perturbed point and the gradient at the current point, represented as $-\alpha \big(\nabla L_{p}(\theta; \mathcal{D}) \cdot \nabla L(\theta; \mathcal{D})\big)$, which we refer to as \textit{exploitation}.  
\textbf{Third}, ensuring that the gradient at the perturbed point is orthogonal to the full-batch gradient or aligned with the batch-specific gradient (see Fig.~\ref{fig:projection} in Appendix and subsection~\ref{sub:orthog_anlys}), represented as $+\big(\nabla L_{p}(\theta; \mathcal{D}) \cdot \nabla^{\mathcal{F}} L(\theta; \mathcal{D})\big)$, which we refer to as \textit{exploration}.

\begin{algorithm}[t]
\small
\caption{SAMPLe algorithm}
\label{alg:sagm}
\begin{algorithmic}[1]
\Require Few-shot training set: $\mathcal{D} = \bigcup_{i=1}^N \{ \bigcup_{m=1}^M {(x^{i}_{m}, y^{i})}\}\} $, number of classes $M$, class name $\mathbf{c} = \{c_j\}_{j=1}^{M}$, pre-trained CLIP with image-encoder $\mathcal{I}$ and text encoder $\mathcal{T}$, prompt token length $P$, training epoch $E$, learning rate $\eta$, SAM perturbation radius $\rho$, $\alpha$, weight decay coefficient$\lambda$, batch-size, $b$.

\Ensure trained learnable prompts.
\State Initialize parameters $\theta_0$, $t = 0$
\While{not converged}
    \State Sample mini-batch $\mathcal{B} \in \mathcal{D}$;
    \State Compute the training mini-batch loss gradient $\nabla {L}(\theta_t; \mathcal{D})$;
    \State update the full-gradient, $m_{t} \approx \nabla^{\mathcal{F}}L(\theta_t; \mathcal{D})$ using Eq.~\ref{eq:EMA} 
    \State Compute SAM perturbation $\epsilon^{*}_{t}$,  according to Eq.~\ref{eq:sam_perturbation}:
    \State Compute $\xi_t = \cos \big(\nabla^{\mathcal{F}}_{} {L}(\theta_t; \mathcal{D}), \nabla_{} {L}_{}(\theta_t; \mathcal{D})\big) $.
    \State Compute normalization factor, $\sigma_t = \frac{\|\nabla \mathcal{L}_{}(\theta; \mathcal{D})\|_2}{\|\nabla^{\mathcal{F}}_{} {L}_{}(\theta; \mathcal{D})\|_2}$.
    \State Compute objective optimization, $\mathcal{L}(\theta_t; \mathcal{D})$, using Eq.~\ref{eq:obj_fn_1}
    \State Update weights:
    $\theta_t \gets \theta_t - \eta_{t} \nabla \mathcal{L} (\theta_{t}; \mathcal{D})$

    \State $t \gets t + 1$
\EndWhile
\end{algorithmic}
\end{algorithm}

\vspace{-0.3cm}
\vspace{-0.2cm}
\subsection{Gradient Orthogonality Analysis:}
\label{sub:orthog_anlys}
\vspace{-0.2cm}
\noindent The third term of Eq.~\ref{eq:aligned_sam_objective} is designed to drive $\nabla {L}_{p}(\theta; \mathcal{D})$ orthogonal to $\nabla^{\mathcal{F}}_{} {L}(\theta; \mathcal{D})$ at the original point $\theta_t$, as we establish below.
\noindent By Eq.~\ref{eq:B_proj}, the mini-batch gradient decomposes as 
$\nabla {L}_{}(\theta; \mathcal{D}) = \nabla^{\mathcal{B}}_{} 
{L}(\theta; \mathcal{D}) + \xi \sigma \nabla^{\mathcal{F}}_{} 
{L}(\theta; \mathcal{D})$, so the second term of 
Eq.~\ref{eq:aligned_sam_objective} alone would pull 
$\nabla {L}_{p}(\theta; \mathcal{D})$ toward both components. 
The third term introduces anti-alignment with $\nabla^{\mathcal{F}}_{} 
{L}(\theta; \mathcal{D})$ (Eq.~\ref{eq:obj_fn_1}): at every iteration, 
the second term drives $\nabla {L}_{p}(\theta; \mathcal{D})$ toward 
$\nabla {L}_{}(\theta; \mathcal{D})$ for performance on seen 
distributions, while the third term resists the full-batch component 
$\xi \sigma \nabla^{\mathcal{F}}_{} {L}(\theta; \mathcal{D})$ within 
that alignment. The equilibrium of these two forces steers 
$\nabla {L}_{p}(\theta; \mathcal{D})$ exclusively toward 
$\nabla^{\mathcal{B}}_{} {L}(\theta; \mathcal{D})$. Since 
$\nabla^{\mathcal{B}}_{} {L}(\theta; \mathcal{D}) \perp 
\nabla^{\mathcal{F}}_{} {L}(\theta; \mathcal{D})$ by construction 
(Eq.~\ref{eq:B_proj} and Fig.~\ref{fig:projection}, Appendix), 
this equilibrium guarantees orthogonality of 
$\nabla {L}_{p}(\theta; \mathcal{D})$ to 
$\nabla^{\mathcal{F}}_{} {L}(\theta; \mathcal{D})$ at $\theta_t$.

\noindent Substituting $\nabla^{\mathcal{B}}_{} {L}(\theta; \mathcal{D}) 
= \nabla {L}_{}(\theta; \mathcal{D}) - \xi \sigma \nabla^{\mathcal{F}}_{} 
{L}(\theta; \mathcal{D})$ (Eq.~\ref{eq:B_proj}) into the joint second 
and third terms of Eq.~\ref{eq:aligned_sam_objective}:

\begin{small}
\begin{align}
    &-\alpha\big(\nabla {L}_{p}(\theta; \mathcal{D}) \cdot 
    \nabla {L}_{}(\theta; \mathcal{D})\big)
    +\alpha\xi\sigma\big(\nabla {L}_{p}(\theta; \mathcal{D}) \cdot 
    \nabla^{\mathcal{F}}_{} {L}(\theta; \mathcal{D})\big)\nonumber\\
    &= -\alpha\big(\nabla {L}_{p}(\theta; \mathcal{D}) \cdot 
    \nabla^{\mathcal{B}}_{} {L}(\theta; \mathcal{D})\big) \\ 
    &~~~-\underbrace{\alpha\xi\sigma\big(\nabla {L}_{p}(\theta; \mathcal{D}) 
    \cdot \nabla^{\mathcal{F}}_{} {L}(\theta; \mathcal{D})\big) 
    + \alpha\xi\sigma\big(\nabla {L}_{p}(\theta; \mathcal{D}) 
    \cdot \nabla^{\mathcal{F}}_{} {L}(\theta; 
    \mathcal{D})\big)}_{=\,0}\nonumber\\
    &= -\alpha\big(\nabla {L}_{p}(\theta; \mathcal{D}) \cdot 
    \nabla^{\mathcal{B}}_{} {L}(\theta; \mathcal{D})\big).
    \label{eq:orthog_cancel}
\end{align}
\end{small}

\noindent The $\nabla^{\mathcal{F}}_{} {L}(\theta; \mathcal{D})$ 
component cancels, and the objective reduces to maximizing 
$\nabla {L}_{p}(\theta; \mathcal{D}) \cdot \nabla^{\mathcal{B}}_{} 
{L}(\theta; \mathcal{D})$ exclusively. Since $\nabla^{\mathcal{B}}_{} 
{L}(\theta; \mathcal{D}) \perp \nabla^{\mathcal{F}}_{} {L}(\theta; 
\mathcal{D})$ by construction (Eq.~\ref{eq:B_proj}), 
$\nabla {L}_{p}(\theta; \mathcal{D})$ is driven toward the subspace 
orthogonal to $\nabla^{\mathcal{F}}_{} {L}(\theta; \mathcal{D})$ 
at $\theta_t$. To confirm this is preserved under $\epsilon^{\star}$, 
let $\hat{\theta}_t = \theta_t + \epsilon^{\star}_t - \alpha_t 
\nabla^{\mathcal{B}}_{} {L}(\theta_t; \mathcal{D})$ denote the 
perturbed point. By the triangle inequality, $\|\epsilon^{\star}_t\| 
\leq \rho_t$ (Eq.~\ref{eq:sam_perturbation}) and condition~(i) in 
Sec.~\ref{sub:conv_anlys}:
\begin{equation}
    \|\hat{\theta}_t - \theta_t\| \leq \rho_t 
    + \alpha_t \nabla\mathcal{L}_{\max}.
    \label{eq:orthog_bound}
\end{equation}

\noindent By K-Lipschitz gradient condition~(ii) in 
Sec.~\ref{sub:conv_anlys} and Lemma~\ref{lemma1} 
(Appendix, Proof of Theorem~\ref{theorem1}):
\begin{small}
\begin{align}
    \bigl|(\nabla {L}_{p}(\theta; \mathcal{D}) {-} 
    \nabla {L}_{}(\theta; \mathcal{D}))
    {\cdot}\nabla^{\mathcal{F}}_{} {L}(\theta; \mathcal{D})\bigr|
    &\leq K(\rho_t {+} \alpha_t\nabla\mathcal{L}_{\max})
    \cdot\nabla {L}_{\max}(1{-}\lambda^T).
    \label{eq:orthog_vanish}
\end{align}
\end{small}
\noindent As $\rho_t, \alpha_t = \mathcal{O}(1/\sqrt{t})$ and 
$\lambda^T \to 0$ as $t \to \infty$, the right-hand side vanishes, 
confirming orthogonality is preserved throughout training.
\vspace{-0.2cm}

\subsection{Convergence of SAMPLe:}
\label{sub:conv_anlys}
Assuming that the objective function defined in Eq.~\ref{eq:obj_fn_1} satisfies the following conditions; 

(i) The gradient of the loss function $\nabla \mathcal{L}(\theta_t; \mathcal{D})$ is bounded, i.e., \vspace{-0.2cm}
\begin{equation}
    \|\nabla \mathcal{L}(\theta_t; \mathcal{D})\| \leq \nabla\mathcal{L}_{max}, \quad \forall t.
\end{equation}

(ii) The stochastic gradient is K-Lipschitz, i.e., 
\begin{equation}
    \|\nabla \mathcal{L}(\theta_t; \mathcal{D}) - \nabla \mathcal{L}(\theta_t'; \mathcal{D})\| \leq K \|\theta_t - \theta_t'\|, \forall (\theta_t, \theta_t').\vspace{-0.2cm} 
\end{equation}
\\
Let the learning rate and both perturbations radii be defined as $\eta_t = \frac{\eta_0}{\sqrt{t}}$, $\alpha_t = \frac{\alpha_0}{\sqrt{t}}$, and $\rho_t = \frac{\rho_0}{\sqrt{t}}$, respectively. 
Theorem. ~\ref{theorem1} in Appendix.~\ref{sec: app_theorm 1} proves that the objective function satisfies following inequality\vspace{-0.2cm}
\begin{equation}
    \frac{1}{T} \sum_{t=1}^{T}  \left[ \|\nabla \mathcal{L}(\theta_t; \mathcal{D})\|^2 \right] \leq \mathcal{O}\left(\frac{\log T}{\sqrt{T}}\right),
    \label{eq:convergense1}
\end{equation}
It means the objective function defined in Eq.~\ref{eq:obj_fn_1} converges with a rate of $\mathcal{O}\left(\frac{\log T}{\sqrt{T}}\right)$, which is comparable with optimization methods such as SGD and Adam.
\vspace{-0.4cm}
\section{Experiments}
\label{sec:Exp}
\vspace{-0.2cm}

In this section, we elaborate on the datasets, baselines, and implementation details utilized in our study. Next, we evaluate the proposed method on base-to-new class generalization (Sec. \ref{sec:base_to_novel}), cross-dataset generalization (Sec.~\ref{sec:cross_dataset}), and cross-domain generalization (Sec.~\ref{sec:cross_domain}).


\begin{figure*}[t]
    \centering
    \begin{minipage}{0.75\textwidth}
        \centering
        \fbox{ 
            \begin{minipage}{0.95\textwidth}
                \centering
                \begin{minipage}{0.32\textwidth}
                    \centering
                    \includegraphics[width=\linewidth]{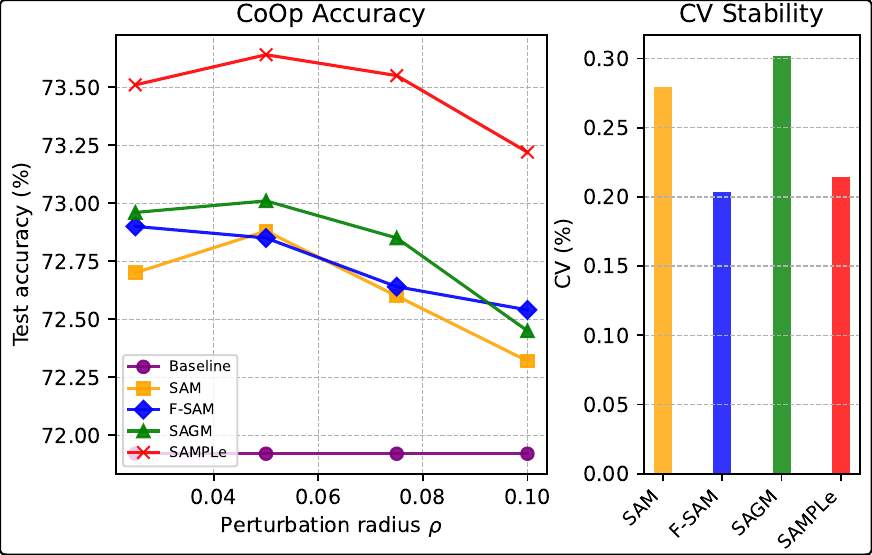}
                \end{minipage}
                \hfill
                \begin{minipage}{0.32\textwidth}
                    \centering
                    \includegraphics[width=\linewidth]{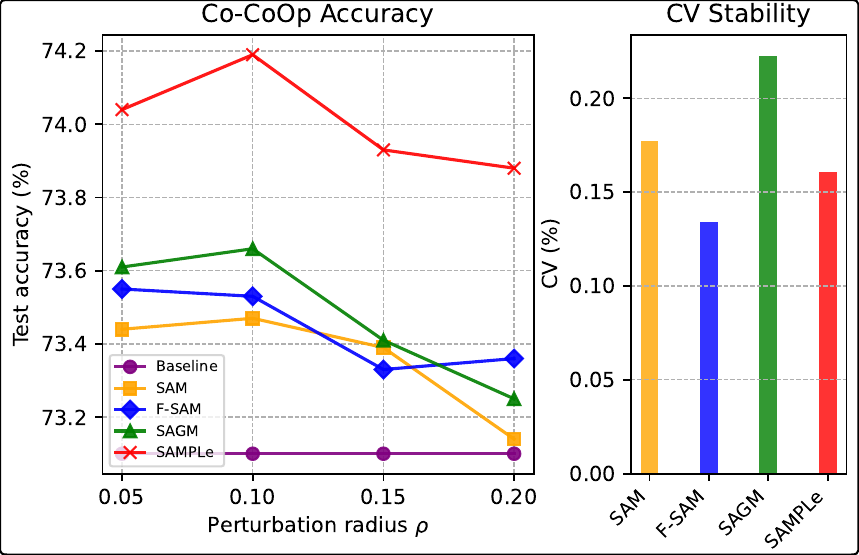}
                \end{minipage}
                \hfill
                \begin{minipage}{0.32\textwidth}
                    \centering
                    \includegraphics[width=\linewidth]{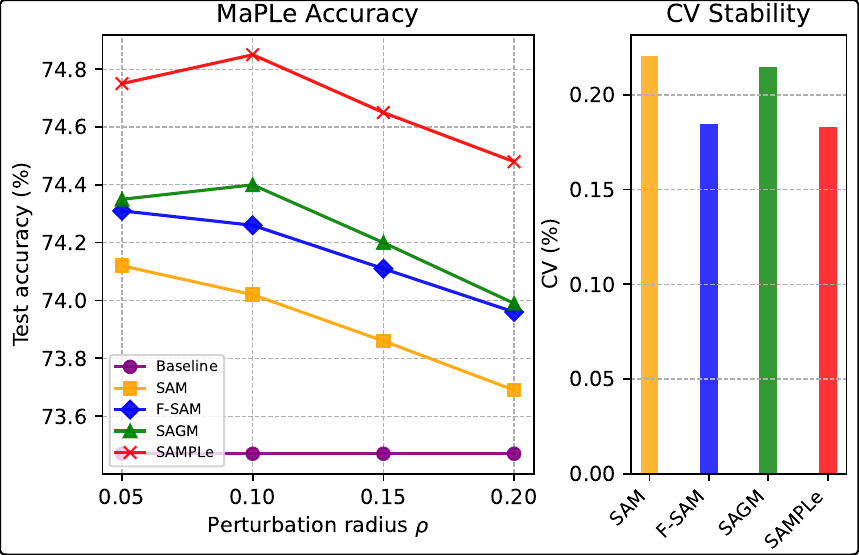}
                \end{minipage}
            \end{minipage}
        }
        \vspace{0.2cm}
        \subcaption{\fontsize{7pt}{7pt}\selectfont{Comparison of F-SAM, SAM, SAGM, and SAMPLe across $\rho$ values, deployed on CoOp, CoCoOp, and MaPLe.}}
    \end{minipage}
    \hfill
    \begin{minipage}{0.24\textwidth}
        \centering
        \fbox{ 
            \begin{minipage}{0.95\textwidth}
                \centering
                \includegraphics[width=\linewidth]{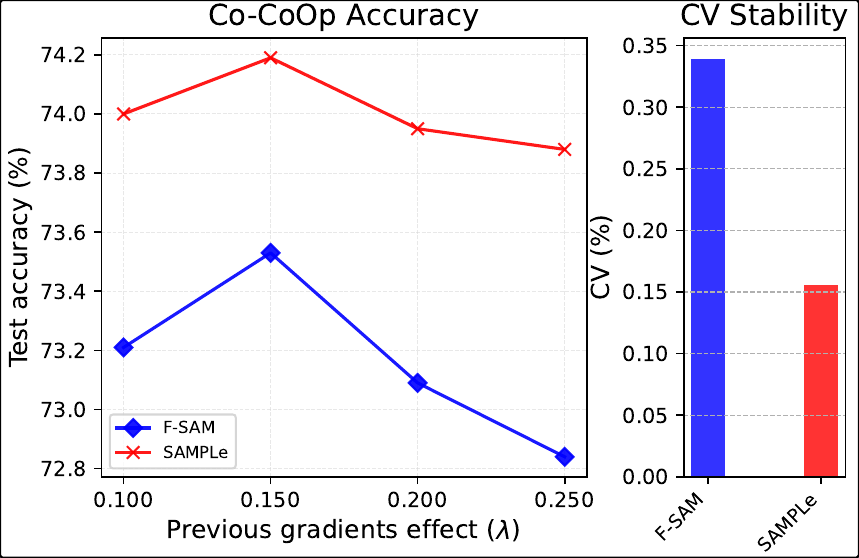}
            \end{minipage}
        }
        \vspace{0.2cm}
        \subcaption{\fontsize{7pt}{7pt}\selectfont{SAMPLe vs F-SAM across $\lambda$ values.}}
    \end{minipage}
    \vspace{0.3cm}
    \caption{Accuracy and coefficient of variation of SAM, F-SAM, SAGM, and SAMPLe on ImageNet across different values of $\rho$ and $\lambda$ for various prompt learning methods, including CoOp, Co-CoOp, and MaPLe.}
    \label{fig:comparison}
\end{figure*}
\vspace{-0.3cm}
\subsection{Datasets}
\label{ssec:datast}
In this work, we utilize $11$ publicly available image classification datasets as downstream tasks: ImageNet \cite{deng2009imagenet}, Caltech101 \cite{1384978}, OxfordPets \cite{parkhi2012cats}, StanfordCars \cite{krause20133d}, Flowers102 \cite{nilsback2008automated}, Food101 \cite{bossard2014food}, FGVCAircraft \cite{maji2013fine}, SUN397 \cite{xiao2010sun}, DTD \cite{cimpoi2014describing}, EuroSAT \cite{helber2019eurosat}, and UCF101 \cite{soomro2012ucf101}. Also, four datasets of ImageNetV2 \cite{recht2019imagenet}, ImageNet-Sketch \cite{wang2019learning}, ImageNet-A \cite{hendrycks2021natural}, and ImageNet-R \cite{hendrycks2021many} serve exclusively as target domains for cross-domain generalization. Detailed dataset descriptions are provided in Supplementary~
\ref{ssec:datatsets_Appndx}.
For each baseline we strictly adhere to the configurations reported in the original papers, including learning rate schedules, backbone architecture, prompt length, prompt initialization, few-shot setup, and three random seeds. Detailed of hyperparameter and implementation descriptions are provided in Supplementary~
\ref{deployment}.
\vspace{-0.3cm}

\subsection{Base-to-New Generalization Setting}
\label{sec:base_to_novel}
To evaluate the generalization capability of our method, we train the model on base classes and test its performance on both base and novel classes. This experimental setup is designed to evaluate the trade-off between maintaining strong performance on the base classes while adapting effectively to novel classes. The results are presented in Table~\ref{tab:full_camparison}, with the harmonic mean (HM) of base and novel accuracies serving as the primary metric for comparison.
As shown in Table~\ref{tab:full_camparison}, integrating SAMPLe consistently improves the harmonic mean (HM) across different prompt-learning backbones and datasets, indicating a more balanced generalization between base and novel classes. On the averaged results (Table~\ref{tab:full_camparison}(a)), SAMPLe achieves the highest HM for all methods. For example, CoOp improves from 77.65 (+SAGM) to 78.88 with SAMPLe, CoCoOp from 78.28 to 79.51, MAPLe from 79.96 to 80.28, CoPrompt from 81.49 to 82.02, and TCP from 81.40 to 81.95. These improvements are mainly associated with higher accuracy on the new classes while maintaining competitive performance on the base classes.
A similar trend is observed across individual datasets. On Flowers102, SAMPLe provides the best HM across all backbones, for instance improving CoCoOp from 85.26 (+SAGM) to 87.15 and TCP from 86.48 to 87.02. On DTD, SAMPLe increases the HM from 65.46 to 69.77 for CoOp and from 70.93 to 71.37 for MAPLe. On EuroSAT, SAMPLe again achieves the highest HM values, reaching 82.44 for CoOp and 84.19 for TCP. Overall, the consistent improvements across methods and datasets suggest that SAMPLe leads to more stable optimization behavior and improved generalization to novel classes without drastically degrading base-class performance.
Another highlighted fact is that, superiority of SAMPLe is almost clear over all mentioned methods specifically in HM, which means SAMPLe keeps both Base and New performance and does not sacrify one of them in favor of the other.

\textbf{Comparison with State-of-the-Art Methods:}
In addition to comparisons with SAM-based optimizers, we evaluate CoOp+SAMPLe against several representative prompt-learning approaches, including CoOp, CoCoOp, MAPLe, and CoPrompt. As shown in Table~\ref{tab:full_camparison}, CoOp+SAMPLe achieves competitive and often superior harmonic mean (HM) performance across several datasets when compared with the corresponding naive prompt-learning methods. 
For example, on OxfordPets, CoOp+SAMPLe achieves an HM of 97.08, surpassing the naive variants of CoCoOp (96.43), MAPLe (96.58), and CoPrompt (96.87). This result suggests that SAMPLe can effectively improve generalization while maintaining strong base-class performance. On more challenging datasets such as FGVCAircraft, CoOp+SAMPLe obtains an HM of 37.58, outperforming MAPLe (36.50) and approaching the performance of CoPrompt (39.76). These results indicate that the proposed optimizer remains effective even in fine-grained recognition scenarios where intra-class variations are large and training samples are limited. 
Similarly, on StanfordCars, CoCoOp+SAMPLe achieves an HM of 75.51, outperforming CoCoOp+FSAM (74.85) and CoCoOp+SAGM (74.75). While maintaining comparable base accuracy, SAMPLe improves performance on the novel classes (77.89), leading to a higher harmonic mean. 
More broadly, the averaged results in Table~\ref{tab:full_camparison}(a) reveal different behaviors among the optimizers. FSAM often improves generalization to novel classes but may reduce base-class accuracy. In contrast, SAMPLe maintains a stronger balance between base and novel performance, leading to consistently higher HM across different prompt-learning backbones. These observations suggest that SAMPLe provides a more stable trade-off between specialization on the base classes and transfer to unseen classes. \vspace{-0cm}


\begin{table}[!htbp]
    \vspace{-0.5cm} 
    \centering
    \captionsetup{justification=justified}
    \caption{Comparison of different prompt learning methods on base and new classes. For each method, the last four rows show performance improvements when using the SAM, FSAM, SAGM, and the proposed SAMPLe optimizers.}
    \label{tab:full_camparison}
    \vspace{-0.3cm}

    \begin{tabular}{cccc}
        \vspace{0.1cm}
\begin{minipage}{0.25\textwidth}
    \centering
    \tiny
    (a) \textbf{Average}
    \renewcommand{\arraystretch}{0.99}
    \setlength{\arrayrulewidth}{0.2mm}
    \setlength{\tabcolsep}{1pt}
    \definecolor{lightgray}{gray}{0.9}

    {\fontsize{3pt}{3pt}\selectfont
    \begin{tabular}
    {>{\centering\arraybackslash}m{1.04cm}
     >{\centering\arraybackslash}m{0.55cm}
     >{\centering\arraybackslash}m{0.55cm}|
     >{\centering\arraybackslash}m{0.55cm}
     >{\centering\arraybackslash}m{0.002cm}
    }
    \toprule
    \textbf{Method} & \textbf{Base} & \textbf{New} & \textbf{HM} & \\ 
    \midrule
    CoOp & \textbf{82.69} & 63.22 & 71.66 & \\
    +SAM & 80.10 & 73.28 & 76.07 & \\
    +FSAM & 80.22 & 74.51 & 77.01 & \\
    +SAGM & 81.57 & 74.52 & 77.65 & \\
    \rowcolor{lightgray}
    +SAMPLe & 81.52 & \textbf{76.76} & \textbf{78.88} & \\
    \rowcolor{white}
    \arrayrulecolor{gray}\midrule
    CoCoOp & 80.47 & 71.69 & 75.83 & \\
    +SAM & 80.64 & 73.32 & 76.47 & \\
    +FSAM & 81.15 & 75.21 & 77.86 & \\
    +SAGM & 81.46 & 75.71 & 78.28 & \\
    \rowcolor{lightgray}
    +SAMPLe & \textbf{82.11} & \textbf{77.39} & \textbf{79.51} & \\
    \rowcolor{white}
    \arrayrulecolor{gray}\midrule
    MAPLe & 82.28 & 75.14 & 78.55 & \\
    SAM & 82.99 & 76.60 & 79.42 & \\
    +FSAM & 83.29 & 77.12 & 79.87 & \\
    +SAGM & 83.36 & 77.24 & 79.96 & \\
    \rowcolor{lightgray}
    +SAMPLe & \textbf{83.51} & \textbf{77.68} & \textbf{80.28} & \\
    \rowcolor{white}
    \arrayrulecolor{gray}\midrule
    CoPrompt & 84.00 & 77.23 & 80.48 & \\
    +SAM & 84.61 & 77.98 & 80.99 & \\
    +FSAM & 85.05 & 78.47 & 81.45 & \\
    +SAGM & 85.12 & 78.47 & 81.49 & \\
    \rowcolor{lightgray}
    +SAMPLe & \textbf{85.62} & \textbf{79.03} & \textbf{82.02} & \\
    \rowcolor{white}
    \arrayrulecolor{gray}\midrule
    TCP & 84.13 & 75.36 & 79.51 & \\
    +SAM & 84.17 & 75.58 & 79.64 & \\
    +FSAM & 84.02 & 76.11 & 79.83 & \\
    +SAGM & 86.97 & 76.50 & 81.40 & \\
    \rowcolor{lightgray}
    +SAMPLe & \textbf{87.52} & \textbf{77.05} & \textbf{81.95} & \\

    \bottomrule
    \end{tabular}
    }
\end{minipage}
 &
        
\begin{minipage}{0.25\textwidth}
    \centering
    \tiny
    (b)\textbf{ImageNet}
    \renewcommand{\arraystretch}{0.99}
    \setlength{\arrayrulewidth}{0.2mm}
    \setlength{\tabcolsep}{1pt}
    \definecolor{lightgray}{gray}{0.9}

    {\fontsize{3.0pt}{3pt}\selectfont
    \begin{tabular}
    {>{\centering\arraybackslash}m{1.04cm}
     >{\centering\arraybackslash}m{0.55cm}
     >{\centering\arraybackslash}m{0.55cm}|
     >{\centering\arraybackslash}m{0.55cm}
     >{\centering\arraybackslash}m{0.002cm}
    }
    \toprule
    \textbf{Method} & \textbf{Base} & \textbf{New} & \textbf{HM} & \\ 
    \midrule
    CoOp & \textbf{76.47} & 67.88 & 71.92 & \\
    +SAM & 76.05 & 69.85 & 72.88 & \\
    +FSAM & 69.79 & 70.27 & 70.03 & \\
    +SAGM & 76.40 & 69.91 & 73.01 & \\
    \rowcolor{lightgray}
    +SAMPLe & {76.36} & \textbf{71.10} & \textbf{73.64} & \\
    \rowcolor{white}
    \arrayrulecolor{gray}\midrule
    CoCoOp & \textbf{75.98} & 70.43 & 73.10 & \\
    +SAM & 75.93 & 71.16 & 73.47 & \\
    +FSAM & 75.59 & 71.62 & 73.55 & \\
    +SAGM & 75.69 & 71.77 & 73.68 & \\
    \rowcolor{lightgray}
    +SAMPLe & {75.95} & \textbf{72.51} & \textbf{74.19} & \\
    \rowcolor{white}
    \arrayrulecolor{gray}\midrule
    MAPLe & {76.66} & 70.54 & 73.47 & \\
    +SAM & 76.54 & 71.85 & 74.12 & \\
    +FSAM & 76.49 & 72.25 & 74.31 & \\
    +SAGM & \textbf{76.69} & 72.24 & 74.40 & \\
    \rowcolor{lightgray}
    +SAMPLe & 76.58 & \textbf{73.20} & \textbf{74.85} & \\
    \rowcolor{white}
    \arrayrulecolor{gray}\midrule
    CoPrompt & \textbf{77.67} & 71.27 & 74.33 & \\
    +SAM & 77.23 & 71.98 & 74.51 & \\
    +FSAM & 77.19 & 72.17 & 74.60 & \\
    +SAGM & 77.65 & 72.09 & 74.77 & \\
    \rowcolor{lightgray}
    +SAMPLe & {77.61} & \textbf{72.38} & \textbf{74.90} & \\
    \rowcolor{white}
    \arrayrulecolor{gray}\midrule
    TCP & 77.27 & 69.87 & 73.38 & \\
    +SAM & 77.34 & 69.96 & 73.46 & \\
    +FSAM & 77.11 & 70.13 & 73.47 & \\
    +SAGM & 77.21 & 70.61 & 73.76 & \\
    \rowcolor{lightgray}
    +SAMPLe & \textbf{77.16} & \textbf{70.90} & \textbf{73.90} & \\

    \bottomrule
    \end{tabular}
    }
\end{minipage}
&
        

\begin{minipage}{0.25\textwidth}
    \centering
    \tiny
    (c)\textbf{Caltech101}
    \renewcommand{\arraystretch}{0.99}
    \setlength{\arrayrulewidth}{0.2mm}
    \setlength{\tabcolsep}{1pt}
    \definecolor{lightgray}{gray}{0.9}

    {\fontsize{3pt}{3pt}\selectfont
    \begin{tabular}
    {>{\centering\arraybackslash}m{1.04cm}
     >{\centering\arraybackslash}m{0.55cm}
     >{\centering\arraybackslash}m{0.55cm}|
     >{\centering\arraybackslash}m{0.55cm}
     >{\centering\arraybackslash}m{0.002cm}
    }
    \toprule
    \textbf{Method} & \textbf{Base} & \textbf{New} & \textbf{HM} & \\ 
    \midrule
    CoOp & \textbf{98.00} & 89.81 & 93.73 & \\
    +SAM & 97.05 & \textbf{95.09} & 96.06 & \\
    +FSAM & 96.57 & 93.62 & 95.07 & \\
    +SAGM & 97.16 & 93.78 & 95.44 & \\
    \rowcolor{lightgray}
    +SAMPLe & 97.49 & 94.98 & \textbf{96.22} & \\
    \rowcolor{white}
    \arrayrulecolor{gray}\midrule
    CoCoOP & 97.96 & 93.81 & 95.84 & \\
    +SAM & 97.63 & 94.85 & 96.22 & \\
    +FSAM & 97.54 & 94.90 & 96.20 & \\
    +SAGM & 97.81 & 95.33 & 96.55 & \\
    \rowcolor{lightgray}
    +SAMPLe & \textbf{98.31} & \textbf{96.07} & \textbf{97.18} & \\
    \rowcolor{white}
    \arrayrulecolor{gray}\midrule
    MAPLe & 97.74 & 94.36 & 96.02 & \\
    +SAM & 98.28 & 95.51 & 96.88 & \\
    +FSAM & 98.32 & 95.42 & 96.85 & \\
    +SAGM & \textbf{98.37} & 95.51 & 96.92 & \\
    \rowcolor{lightgray}
    +SAMPLe & 98.31 & \textbf{95.74} & \textbf{97.01} & \\
    \rowcolor{white}
    \arrayrulecolor{gray}\midrule
    CoPrompt & 98.27 & 94.90 & 96.55 & \\
    +SAM & 98.98 & 95.68 & 97.30 & \\
    +FSAM & 98.69 & 95.93 & 97.29 & \\
    +SAGM & 98.72 & 96.12 & 97.40 & \\
    \rowcolor{lightgray}
    +SAMPLe & \textbf{99.03} & \textbf{96.66} & \textbf{97.83} & \\
    \rowcolor{white}
    \arrayrulecolor{gray}\midrule
    TCP & 98.23 & 94.67 & 96.42 & \\
    +SAM & 98.31 & 94.82 & 96.55 & \\
    +FSAM & 98.02 & 94.97 & 96.49 & \\
    +SAGM & 98.62 & 95.83 & 97.20 & \\
    \rowcolor{lightgray}
    +SAMPLe & \textbf{98.94} & \textbf{96.38} & \textbf{97.64} & \\

    \bottomrule
    \end{tabular}
    }
\end{minipage}
&
        

\begin{minipage}{0.25\textwidth}
    \centering
    \tiny
    (d)\textbf{OxfordPets}
    \renewcommand{\arraystretch}{0.99}
    \setlength{\arrayrulewidth}{0.2mm}
    \setlength{\tabcolsep}{1pt}
    \definecolor{lightgray}{gray}{0.9}

    {\fontsize{3pt}{3pt}\selectfont
    \begin{tabular}
    {>{\centering\arraybackslash}m{1.04cm}
     >{\centering\arraybackslash}m{0.55cm}
     >{\centering\arraybackslash}m{0.55cm}|
     >{\centering\arraybackslash}m{0.55cm}
     >{\centering\arraybackslash}m{0.002cm}
    }
    \toprule
    \textbf{Method} & \textbf{Base} & \textbf{New} & \textbf{HM} & \\ 
    \midrule
    CoOp & 93.67 & 95.29 & 94.47 & \\
    +SAM & 93.68 & 97.65 & 95.62 & \\
    +FSAM & 94.22 & 96.38 & 95.29 & \\
    +SAGM & 95.41 & 96.31 & 95.86 & \\
    \rowcolor{lightgray}
    +SAMPLe & \textbf{95.92} & \textbf{98.27} & \textbf{97.08} & \\
    \rowcolor{white}
    \arrayrulecolor{gray}\midrule
    CoCoOp & 95.20 & 97.69 & 96.43 & \\
    +SAM & 94.79 & 97.43 & 96.09 & \\
    +FSAM & 94.96 & 97.58 & 96.25 & \\
    +SAGM & 95.29 & 97.67 & 96.47 & \\
    \rowcolor{lightgray}
    +SAMPLe & \textbf{95.98} & \textbf{98.59} & \textbf{97.27} & \\
    \rowcolor{white}
    \arrayrulecolor{gray}\midrule
    MAPLe & 95.43 & 97.76 & 96.58 & \\
    +SAM & 96.23 & 98.45 & 97.33 & \\
    +FSAM & {96.41} & \textbf{98.63} & 97.51 & \\
    +SAGM & 96.45 & 98.52 & 97.47 & \\
    \rowcolor{lightgray}
    +SAMPLe & \textbf{96.54} & 98.61 & \textbf{97.56} & \\
    \rowcolor{white}
    \arrayrulecolor{gray}\midrule
    CoPrompt & 95.67 & 98.10 & 96.87 & \\
    +SAM & 96.52 & 98.99 & 97.74 & \\
    +FSAM & 96.86 & 99.35 & 98.09 & \\
    +SAGM & 97.00 & 99.25 & 98.11 & \\
    \rowcolor{lightgray}
    +SAMPLe & \textbf{97.65} & \textbf{99.93} & \textbf{98.78} & \\
    \rowcolor{white}
    \arrayrulecolor{gray}\midrule
    TCP & 94.67 & 97.20 & 95.92 & \\
    +SAM & 94.19 & 97.53 & 95.83 & \\
    +FSAM & 94.50 & 98.03 & 96.23 & \\
    +SAGM & 95.33 & 98.18 & 96.73 & \\
    \rowcolor{lightgray}
    +SAMPLe & \textbf{96.58} & \textbf{98.96} & \textbf{97.76} & \\

    \bottomrule
    \end{tabular}
    }
\end{minipage} \\
         \vspace{0.1cm}
\begin{minipage}{0.25\textwidth}
    \centering
    \tiny
    (e)\textbf{StanfordCars}
    \renewcommand{\arraystretch}{0.99}
    \setlength{\arrayrulewidth}{0.2mm}
    \setlength{\tabcolsep}{1pt}
    \definecolor{lightgray}{gray}{0.9}

    {\fontsize{3pt}{3pt}\selectfont
    \begin{tabular}
    {>{\centering\arraybackslash}m{1.04cm}
     >{\centering\arraybackslash}m{0.55cm}
     >{\centering\arraybackslash}m{0.55cm}|
     >{\centering\arraybackslash}m{0.55cm}
     >{\centering\arraybackslash}m{0.002cm}
    }
    \toprule
    \textbf{Method} & \textbf{Base} & \textbf{New} & \textbf{HM} & \\ 
    \midrule
    CoOp & 78.12 & 60.40 & 68.13 & \\
    +SAM & 72.84 & 73.11 & 72.97 & \\
    +FSAM & \textbf{76.13} & 73.92 & \textbf{74.99} & \\
    +SAGM & 74.96 & 73.86 & 74.41 & \\
    \rowcolor{lightgray}
    +SAMPLe & {72.21} & \textbf{75.10} & 73.63 & \\
    \rowcolor{white}
    \arrayrulecolor{gray}\midrule
    CoCoOp & 70.49 & 73.59 & 72.01 & \\
    +SAM & 71.00 & 75.22 & 73.05 & \\
    +FSAM & \textbf{73.92} & 75.80 & 74.85 & \\
    +SAGM & 73.41 & 76.13 & 74.75 & \\
    \rowcolor{lightgray}
    +SAMPLe & 73.27 & \textbf{77.89} & \textbf{75.51} & \\
    \rowcolor{white}
    \arrayrulecolor{gray}\midrule
    MAPLe & 72.94 & 74.00 & 73.47 & \\
    +SAM & 74.21 & 75.83 & 75.01 & \\
    +FSAM & 75.75 & 76.41 & 76.08 & \\
    +SAGM & 75.40 & 76.68 & 76.03 & \\
    \rowcolor{lightgray}
    +SAMPLe & \textbf{76.32} & \textbf{77.44} & \textbf{76.88} & \\
    \rowcolor{white}
    \arrayrulecolor{gray}\midrule
    CoPrompt & 76.97 & 74.40 & 75.66 & \\
    +SAM & 77.64 & 75.00 & 76.30 & \\
    +FSAM & 78.38 & 75.49 & 76.91 & \\
    +SAGM & 78.39 & 75.56 & 76.95 & \\
    \rowcolor{lightgray}
    +SAMPLe & \textbf{79.14} & \textbf{76.03} & \textbf{77.55} & \\
    \rowcolor{white}
    \arrayrulecolor{gray}\midrule
    TCP & 80.80 & 74.13 & 77.32 & \\
    +SAM & 81.03 & 74.13 & 77.43 & \\
    +FSAM & 80.76 & 74.51 & 77.50 & \\
    +SAGM & 82.23 & 75.25 & 78.59 & \\
    \rowcolor{lightgray}
    +SAMPLe & \textbf{83.03} & \textbf{75.70} & \textbf{79.19} & \\

    \bottomrule
    \end{tabular}
    }
\end{minipage}
 &
        
\begin{minipage}{0.25\textwidth}
    \centering
    \tiny
    (f) \textbf{Flowers102}
    \renewcommand{\arraystretch}{0.99}
    \setlength{\arrayrulewidth}{0.2mm}
    \setlength{\tabcolsep}{1pt}
    \definecolor{lightgray}{gray}{0.9}

    {\fontsize{3pt}{3pt}\selectfont
    \begin{tabular}
    {>{\centering\arraybackslash}m{1.04cm}
     >{\centering\arraybackslash}m{0.55cm}
     >{\centering\arraybackslash}m{0.55cm}|
     >{\centering\arraybackslash}m{0.55cm}
     >{\centering\arraybackslash}m{0.002cm}
    }
    \toprule
    \textbf{Method} & \textbf{Base} & \textbf{New} & \textbf{HM} & \\ 
    \midrule
    CoOp & {97.60} & 59.67 & 74.06 & \\
    +SAM & 94.97 & 72.65 & 82.32 & \\
    +FSAM & 94.91 & 73.85 & 83.07 & \\
    +SAGM & 95.25 & 74.53 & 83.63 & \\
    \rowcolor{lightgray}
    +SAMPLe & \textbf{96.73} & \textbf{77.52} & \textbf{86.07} & \\
    \rowcolor{white}
    \arrayrulecolor{gray}\midrule
    CoCoOp & 94.87 & 71.75 & 81.71 & \\
    +SAM & 95.02 & 74.40 & 83.46 & \\
    +FSAM & 95.17 & 76.43 & 84.78 & \\
    +SAGM & 95.50 & 77.01 & 85.26 & \\
    \rowcolor{lightgray}
    +SAMPLe & \textbf{96.47} & \textbf{79.48} & \textbf{87.15} & \\
    \rowcolor{white}
    \arrayrulecolor{gray}\midrule
    MAPLe & 95.92 & 72.46 & 82.56 & \\
    +SAM & 96.53 & 75.03 & 84.43 & \\
    +FSAM & 96.69 & 75.91 & 85.05 & \\
    +SAGM & 96.77 & 76.07 & 85.18 & \\
    \rowcolor{lightgray}
    +SAMPLe & \textbf{96.82} & \textbf{76.98} & \textbf{85.77} & \\
    \rowcolor{white}
    \arrayrulecolor{gray}\midrule
    CoPrompt & 97.27 & 76.60 & 85.71 & \\
    +SAM & 97.79 & 77.55 & 86.50 & \\
    +FSAM & 98.61 & 78.10 & 87.16 & \\
    +SAGM & 98.36 & 78.02 & 87.02 & \\
    \rowcolor{lightgray}
    +SAMPLe & \textbf{99.08} & \textbf{78.47} & \textbf{87.58} & \\
    \rowcolor{white}
    \arrayrulecolor{gray}\midrule
    TCP & 97.73 & 75.57 & 85.23 & \\
    +SAM & 97.99 & 75.37 & 85.21 & \\
    +FSAM & 97.44 & 76.30 & 85.62 & \\
    +SAGM & 98.76 & 76.91 & 86.48 & \\
    \rowcolor{lightgray}
    +SAMPLe & \textbf{99.49} & \textbf{77.36} & \textbf{87.02} & \\

    \bottomrule
    \end{tabular}
    }
\end{minipage}

 &
        
\begin{minipage}{0.25\textwidth}
    \centering
    \tiny
    (g) \textbf{Food101}
    \renewcommand{\arraystretch}{0.99}
    \setlength{\arrayrulewidth}{0.2mm}
    \setlength{\tabcolsep}{1pt}
    \definecolor{lightgray}{gray}{0.9}
    {\fontsize{3pt}{3pt}\selectfont
    \begin{tabular}{>{\centering\arraybackslash}m{1.04cm}
                    >{\centering\arraybackslash}m{0.55cm}
                    >{\centering\arraybackslash}m{0.55cm}|
                    >{\centering\arraybackslash}m{0.55cm}
                    >{\centering\arraybackslash}m{0.002cm}}
    \toprule
    \textbf{Method} & \textbf{Base} & \textbf{New} & \textbf{HM} & \\
    \midrule
    CoOp & 88.33 & 82.26 & 85.19 & \\
    +SAM & 88.19 & 90.12 & 89.14 & \\
    +FSAM & 89.74 & 90.40 & 90.07 & \\
    +SAGM & 90.48 & 91.54 & 91.00 & \\
    \rowcolor{lightgray}
    +SAMPLe & \textbf{90.75} & \textbf{92.01} & \textbf{91.38} & \\
    \rowcolor{white}
    \arrayrulecolor{gray}\midrule
    CoCoOp & 90.70 & 91.29 & 90.99 & \\
    +SAM & 90.83 & 91.70 & 91.26 & \\
    +FSAM & 91.21 & 92.11 & 91.66 & \\
    +SAGM & 91.36 & 92.35 & 91.85 & \\
    \rowcolor{lightgray}
    +SAMPLe & \textbf{91.84} & \textbf{93.08} & \textbf{92.46} & \\
    \rowcolor{white}
    \arrayrulecolor{gray}\midrule
    MAPLe & 90.71 & 92.05 & 91.38 & \\
    +SAM & 91.55 & 92.81 & 92.18 & \\
    +FSAM & 91.88 & 93.20 & 92.54 & \\
    +SAGM & \textbf{91.95} & \textbf{93.35} & \textbf{92.64} & \\
    \rowcolor{lightgray}
    +SAMPLe & 91.94 & 93.31 & 92.62 & \\
    \rowcolor{white}
    \arrayrulecolor{gray}\midrule
    CoPrompt & 90.73 & 92.07 & 91.40 & \\
    +SAM & 91.61 & 92.81 & 92.21 & \\
    +FSAM & 92.10 & 93.47 & 92.78 & \\
    +SAGM & 92.21 & 93.42 & 92.81 & \\
    \rowcolor{lightgray}
    +SAMPLe & \textbf{92.70} & \textbf{94.21} & \textbf{93.45} & \\
    \rowcolor{white}
    \arrayrulecolor{gray}\midrule
    TCP & 90.57 & 91.37 & 90.97 & \\
    +SAM & 90.72 & 91.78 & 91.25 & \\
    +FSAM & 90.55 & 91.92 & 91.23 & \\
    +SAGM & 92.01 & 92.66 & 92.33 & \\
    \rowcolor{lightgray}
    +SAMPLe & \textbf{92.49} & \textbf{93.44} & \textbf{92.96} & \\

    \bottomrule
    \end{tabular}
    }
\end{minipage}
 &
        
\begin{minipage}{0.25\textwidth}
    \centering
    \tiny
    (h) \textbf{FGVCAircrafts}
    \renewcommand{\arraystretch}{0.99}
    \setlength{\arrayrulewidth}{0.2mm}
    \setlength{\tabcolsep}{1pt}
    \definecolor{lightgray}{gray}{0.9}
    {\fontsize{3pt}{3pt}\selectfont
    \begin{tabular}{>{\centering\arraybackslash}m{1.04cm}
                    >{\centering\arraybackslash}m{0.55cm}
                    >{\centering\arraybackslash}m{0.55cm}|
                    >{\centering\arraybackslash}m{0.55cm}
                    >{\centering\arraybackslash}m{0.002cm}}
    \toprule
    \textbf{Method} & \textbf{Base} & \textbf{New} & \textbf{HM} & \\
    \midrule
    CoOp & \textbf{40.44} & 22.30 & 28.75 & \\
    +SAM & 34.33 & 35.87 & 35.08 & \\
    +FSAM & 35.92 & \textbf{39.12} & 37.45 & \\
    +SAGM & 36.73 & 35.75 & 36.23 & \\
    \rowcolor{lightgray}
    +SAMPLe & 36.43 & 38.81 & \textbf{37.58} & \\
    \rowcolor{white}
    \arrayrulecolor{gray}\midrule
    CoCoOp & 33.41 & 23.71 & 27.74 & \\
    +SAM & 34.62 & 29.53 & 31.87 & \\
    +FSAM & 35.45 & 33.67 & 34.54 & \\
    +SAGM & 35.79 & 34.18 & 34.97 & \\
    \rowcolor{lightgray}
    +SAMPLe & \textbf{37.14} & \textbf{36.59} & \textbf{36.86} & \\
    \rowcolor{white}
    \arrayrulecolor{gray}\midrule
    MAPLe & 37.44 & 35.61 & 36.50 & \\
    +SAM & 38.55 & 37.99 & 38.27 & \\
    +FSAM & 39.08 & 38.79 & 38.93 & \\
    +SAGM & 39.11 & 38.37 & 38.74 & \\
    \rowcolor{lightgray}
    +SAMPLe & \textbf{39.47} & \textbf{39.01} & \textbf{39.24} & \\
    \rowcolor{white}
    \arrayrulecolor{gray}\midrule
    CoPrompt & 40.20 & 39.33 & 39.76 & \\
    +SAM & 41.08 & 39.88 & 40.47 & \\
    +FSAM & 41.28 & 40.68 & 40.98 & \\
    +SAGM & 41.45 & 40.68 & 41.06 & \\
    \rowcolor{lightgray}
    +SAMPLe & \textbf{42.38} & \textbf{40.95} & \textbf{41.65} & \\
    \rowcolor{white}
    \arrayrulecolor{gray}\midrule
    TCP & 41.97 & 34.43 & 37.83 & \\
    +SAM & 42.02 & 34.49 & 37.89 & \\
    +FSAM & 42.05 & 35.02 & 38.21 & \\
    +SAGM & 43.23 & 35.55 & 39.02 & \\
    \rowcolor{lightgray}
    +SAMPLe & \textbf{44.19} & \textbf{35.79} & \textbf{39.60} & \\

    \bottomrule
    \end{tabular}
    }
\end{minipage}
 \\
         \vspace{0.1cm}

\begin{minipage}{0.25\textwidth}
    \centering
    \tiny
    (i) \textbf{SUN397}
    \renewcommand{\arraystretch}{0.99}
    \setlength{\arrayrulewidth}{0.2mm}
    \setlength{\tabcolsep}{1pt}
    \definecolor{lightgray}{gray}{0.9}
    {\fontsize{3pt}{3pt}\selectfont
    \begin{tabular}{>{\centering\arraybackslash}m{1.04cm}
                    >{\centering\arraybackslash}m{0.55cm}
                    >{\centering\arraybackslash}m{0.55cm}|
                    >{\centering\arraybackslash}m{0.55cm}
                    >{\centering\arraybackslash}m{0.002cm}}
    \toprule
    \textbf{Method} & \textbf{Base} & \textbf{New} & \textbf{HM} & \\
    \midrule
    CoOp & \textbf{80.60} & 65.89 & 72.51 & \\
    +SAM & 77.84 & 78.82 & 78.33 & \\
    +FSAM & 78.42 & 78.98 & 78.70 & \\
    +SAGM & 78.50 & 79.03 & 78.76 & \\
    \rowcolor{lightgray}
    +SAMPLe & 78.40 & \textbf{79.22} & \textbf{78.81} & \\
    \rowcolor{white}
    \arrayrulecolor{gray}\midrule
    CoCoOp & 79.74 & 76.86 & 78.27 & \\
    +SAM & 79.48 & 77.44 & 78.45 & \\
    +FSAM & 79.73 & 78.35 & 79.03 & \\
    +SAGM & 80.03 & 78.83 & 79.43 & \\
    \rowcolor{lightgray}
    +SAMPLe & \textbf{80.58} & \textbf{79.65} & \textbf{80.11} & \\
    \rowcolor{white}
    \arrayrulecolor{gray}\midrule
    MAPLe & 80.82 & 78.70 & 79.75 & \\
    +SAM & 81.42 & 79.86 & 80.63 & \\
    +FSAM & 81.68 & 80.06 & 80.86 & \\
    +SAGM & 81.74 & 80.26 & 80.99 & \\
    \rowcolor{lightgray}
    MAPLe+SAMPLe & \textbf{81.83} & \textbf{80.48} & \textbf{81.15} & \\
    \rowcolor{white}
    \arrayrulecolor{gray}\midrule
    CoPrompt & 82.63 & 80.03 & 81.31 & \\
    +SAM & 83.27 & 80.73 & 81.98 & \\
    +FSAM & 84.09 & 81.19 & 82.61 & \\
    +SAGM & 83.96 & 81.34 & 82.63 & \\
    \rowcolor{lightgray}
    +SAMPLe & \textbf{84.17} & \textbf{82.08} & \textbf{83.11} & \\
    \rowcolor{white}
    \arrayrulecolor{gray}\midrule
    TCP & 82.63 & 78.20 & 80.35 & \\
    +SAM & 82.83 & 78.66 & 80.70 & \\
    +FSAM & 82.57 & 78.99 & 80.74 & \\
    +SAGM & 83.92 & 79.43 & 81.61 & \\
    \rowcolor{lightgray}
    +SAMPLe & \textbf{84.11} & \textbf{80.16} & \textbf{82.08} & \\

    \bottomrule
    \end{tabular}
    }
\end{minipage}
 &
        
\begin{minipage}{0.25\textwidth}
    \centering
    \tiny
    (j) \textbf{DTD}
    \renewcommand{\arraystretch}{0.99}
    \setlength{\arrayrulewidth}{0.2mm}
    \setlength{\tabcolsep}{1pt}
    \definecolor{lightgray}{gray}{0.9}
    {\fontsize{3pt}{3pt}\selectfont
    \begin{tabular}{>{\centering\arraybackslash}m{1.04cm}
                    >{\centering\arraybackslash}m{0.55cm}
                    >{\centering\arraybackslash}m{0.55cm}|
                    >{\centering\arraybackslash}m{0.55cm}
                    >{\centering\arraybackslash}m{0.002cm}}
    \toprule
    \textbf{Method} & \textbf{Base} & \textbf{New} & \textbf{HM} & \\
    \midrule
    CoOp & \textbf{79.44} & 41.18 & 54.24 & \\
    +SAM & 76.97 & 56.52 & 65.18 & \\
    +FSAM & 73.93 & 54.78 & 62.93 & \\
    +SAGM & 77.31 & 56.76 & 65.46 & \\
    \rowcolor{lightgray}
    +SAMPLe & {78.12} & \textbf{63.04} & \textbf{69.77} & \\
    \rowcolor{white}
    \arrayrulecolor{gray}\midrule
    CoCoOp & 77.01 & 56.00 & 64.85 & \\
    +SAM & 77.05 & 57.13 & 65.61 & \\
    +FSAM & 76.60 & 57.92 & 65.96 & \\
    +SAGM & 77.47 & 59.15 & 67.08 & \\
    \rowcolor{lightgray}
    +SAMPLe & \textbf{78.14} & \textbf{63.05} & \textbf{69.79} & \\
    \rowcolor{white}
    \arrayrulecolor{gray}\midrule
    MAPLe & 80.36 & 59.18 & 68.16 & \\
    +SAM & 80.99 & 61.43 & 69.87 & \\
    +FSAM & 80.88 & 62.55 & 70.54 & \\
    +SAGM & 81.15 & 63.00 & 70.93 & \\
    \rowcolor{lightgray}
    +SAMPLe & \textbf{81.23} & \textbf{63.65} & \textbf{71.37} & \\
    \rowcolor{white}
    \arrayrulecolor{gray}\midrule
    CoPrompt & 83.13 & 64.73 & 72.79 & \\
    +SAM & 83.67 & 65.51 & 73.48 & \\
    +FSAM & 84.49 & 65.74 & 73.94 & \\
    +SAGM & 84.31 & 66.21 & 74.17 & \\
    \rowcolor{lightgray}
    +SAMPLe & \textbf{84.72} & \textbf{66.57} & \textbf{74.56} & \\
    \rowcolor{white}
    \arrayrulecolor{gray}\midrule
    TCP & 82.77 & 58.07 & 68.25 & \\
    +SAM & 82.73 & 58.39 & 68.47 & \\
    +FSAM & 82.84 & 59.41 & 69.17 & \\
    +SAGM & 83.89 & 59.34 & 69.51 & \\
    \rowcolor{lightgray}
    +SAMPLe & \textbf{84.30} & \textbf{59.67} & \textbf{69.88} & \\

    \bottomrule
    \end{tabular}
    }
\end{minipage}
 &
        
\begin{minipage}{0.25\textwidth}
    \centering
    \tiny
    (k) \textbf{EuroSAT}
    \renewcommand{\arraystretch}{0.99}
    \setlength{\arrayrulewidth}{0.2mm}
    \setlength{\tabcolsep}{1pt}
    \definecolor{lightgray}{gray}{0.9}
    {\fontsize{3pt}{3pt}\selectfont
    \begin{tabular}{>{\centering\arraybackslash}m{1.04cm}
                    >{\centering\arraybackslash}m{0.55cm}
                    >{\centering\arraybackslash}m{0.55cm}|
                    >{\centering\arraybackslash}m{0.55cm}
                    >{\centering\arraybackslash}m{0.002cm}}
    \toprule
    \textbf{Method} & \textbf{Base} & \textbf{New} & \textbf{HM} & \\
    \midrule
    CoOp & \textbf{92.19} & 54.74 & 68.69 & \\
    +SAM & 87.00 & 61.23 & 71.87 & \\
    +FSAM & 89.60 & 73.41 & 80.70 & \\
    +SAGM & 91.07 & 72.31 & 80.61 & \\
    \rowcolor{lightgray}
    +SAMPLe & {90.88} & \textbf{75.43} & \textbf{82.44} & \\
    \rowcolor{white}
    \arrayrulecolor{gray}\midrule
    CoCoOp & 87.49 & 60.04 & 71.21 & \\
    +SAM & 88.02 & 62.00 & 72.75 & \\
    +FSAM & 89.19 & 72.15 & 79.77 & \\
    +SAGM & 90.05 & 73.00 & 80.63 & \\
    \rowcolor{lightgray}
    +SAMPLe & \textbf{90.76} & \textbf{75.32} & \textbf{82.32} & \\
    \rowcolor{white}
    \arrayrulecolor{gray}\midrule
    MAPLe & 94.07 & 73.23 & 82.35 & \\
    +SAM & 94.55 & 74.25 & 83.18 & \\
    +FSAM & 94.89 & 75.12 & 83.86 & \\
    +SAGM & 94.92 & 75.38 & 84.03 & \\
    \rowcolor{lightgray}
    +SAMPLe & \textbf{94.96} & \textbf{75.66} & \textbf{84.22} & \\
    \rowcolor{white}
    \arrayrulecolor{gray}\midrule
    CoPrompt & 94.60 & 78.57 & 85.84 & \\
    +SAM & 95.27 & 79.56 & 86.71 & \\
    +FSAM & 95.89 & 80.05 & 87.26 & \\
    +SAGM & 96.03 & 79.85 & 87.20 & \\
    \rowcolor{lightgray}
    +SAMPLe & \textbf{96.75} & \textbf{80.45} & \textbf{87.85} & \\
    \rowcolor{white}
    \arrayrulecolor{gray}\midrule
    TCP & 91.63 & 74.73 & 82.32 & \\
    +SAM & 91.54 & 75.13 & 82.55 & \\
    +FSAM & 91.92 & 75.87 & 83.24 & \\
    +SAGM & 92.96 & 75.90 & 83.57 & \\
    \rowcolor{lightgray}
    +SAMPLe & \textbf{93.66} & \textbf{76.46} & \textbf{84.19} & \\

    \bottomrule
    \end{tabular}
    }
\end{minipage}
 &
        
\begin{minipage}{0.25\textwidth}
    \centering
    \tiny
    (l) \textbf{UCF101}
    \renewcommand{\arraystretch}{0.99}
    \setlength{\arrayrulewidth}{0.2mm}
    \setlength{\tabcolsep}{1pt}
    \definecolor{lightgray}{gray}{0.9}
    {\fontsize{3pt}{3pt}\selectfont
    \begin{tabular}{>{\centering\arraybackslash}m{1.04cm}
                    >{\centering\arraybackslash}m{0.55cm}
                    >{\centering\arraybackslash}m{0.55cm}|
                    >{\centering\arraybackslash}m{0.55cm}
                    >{\centering\arraybackslash}m{0.002cm}}
    \toprule
    \textbf{Method} & \textbf{Base} & \textbf{New} & \textbf{HM} & \\
    \midrule
    CoOp & \textbf{84.69} & 56.05 & 67.46 & \\
    +SAM & 82.16 & 75.13 & 78.49 & \\
    +FSAM & 83.14 & 74.88 & 78.79 & \\
    +SAGM & 83.97 & 75.96 & 79.76 & \\
    \rowcolor{lightgray}
    +SAMPLe & {83.40} & \textbf{78.85} & \textbf{81.06} & \\
    \rowcolor{white}
    \arrayrulecolor{gray}\midrule
    CoCoOp & 82.33 & 73.45 & 77.64 & \\
    +SAM & 82.64 & 75.62 & 78.97 & \\
    +FSAM & 83.26 & 76.77 & 79.88 & \\
    +SAGM & 83.61 & 77.44 & 80.41 & \\
    \rowcolor{lightgray}
    +SAMPLe & \textbf{84.73} & \textbf{79.01} & \textbf{81.77} & \\
    \rowcolor{white}
    \arrayrulecolor{gray}\midrule
    MAPLe & 83.00 & 78.66 & 80.77 & \\
    +SAM & 84.05 & 79.59 & 81.76 & \\
    +FSAM & 84.14 & 80.02 & 82.03 & \\
    +SAGM & 84.36 & 80.24 & 82.25 & \\
    \rowcolor{lightgray}
    +SAMPLe & \textbf{84.62} & \textbf{80.39} & \textbf{82.45} & \\
    \rowcolor{white}
    \arrayrulecolor{gray}\midrule
    CoPrompt & 86.90 & 79.57 & 83.07 & \\
    +SAM & 87.65 & 80.14 & 83.73 & \\
    +FSAM & 88.00 & 81.04 & 84.38 & \\
    +SAGM & 88.27 & 80.67 & 84.30 & \\
    \rowcolor{lightgray}
    +SAMPLe & \textbf{88.55} & \textbf{81.59} & \textbf{84.93} & \\
    \rowcolor{white}
    \arrayrulecolor{gray}\midrule
    TCP & 87.13 & 80.77 & 83.83 & \\
    +SAM & 87.22 & 81.17 & 84.08 & \\
    +FSAM & 87.50 & 82.04 & 84.69 & \\
    +SAGM & 88.45 & 81.83 & 85.01 & \\
    \rowcolor{lightgray}
    +SAMPLe & \textbf{88.73} & \textbf{82.76} & \textbf{85.64} & \\

    \bottomrule
    \end{tabular}
    }
\end{minipage}
\\

    \end{tabular}
\end{table} \vspace{-0cm}

\textbf{{Insights and Ablation Study}:}
Table~\ref{tab:full_camparison} and Fig~\ref{fig:comparison} demonstrate that SAMPLe consistently outperforms existing sharpness-aware optimizers by maintaining both generalization and stability across varying $\rho$ and $\lambda$. While F-SAM mitigates the excessive perturbation of traditional SAM by deviating from the mini-batch gradient, $\nabla^{\mathcal{F}}$, it still enforces a rigid batch-specific gradient, $\nabla^{\mathcal{B}}$, which remains inherently sensitive to the noisy approximation of $\nabla^{\mathcal{F}}$. This structural limitation is evident in Fig~\ref{fig:comparison}, where F-SAM and SAMPLe exhibit comparable robustness to variations in $\rho$, outperforming both SAM and SAGM. However, SAMPLe consistently achieves higher accuracy across all perturbation radii and prompt-learning methods, highlighting its optimization-aware perturbation alignment. Rather than rigidly enforcing batch-specific perturbations, SAMPLe dynamically aligns them with the current optimization state, striking an optimal balance between exploration and stability. This adaptability translates into significant performance gains, as confirmed by Figure~\ref{fig:comparison}. \vspace{-0.2cm}
\subsection{Cross-Dataset Zero-Shot Generalization Setting}
\label{sec:cross_dataset}

To evaluate cross-domain robustness, we train models on the ImageNet dataset containing $1000$ classes and directly evaluate them on $10$ unseen target datasets without any fine-tuning. This protocol measures the ability of prompt learning methods to transfer knowledge learned from a large-scale source domain to diverse downstream domains with varying visual characteristics.

Table~\ref{tab:cross-dataset} summarizes the cross-dataset performance of several prompt learning approaches with and without the proposed SAMPLe optimizer. Overall, incorporating SAMPLe consistently improves the transfer performance of prompt-based models across most datasets. For example, CoOp benefits significantly from SAMPLe, increasing its average accuracy from $63.88\%$ to $65.85\%$. The improvement is particularly notable on datasets such as Cars ($66.35\%$ vs.\ $64.51\%$), Flowers ($71.46\%$ vs.\ $68.71\%$), and Aircraft ($23.61\%$ vs.\ $18.47\%$), which typically exhibit larger domain shifts from ImageNet.

\begin{table*}[!htbp]
    \centering
    \captionsetup{justification=justified}
    \caption{Performance improvement by SAMPLe optimizer in zero-shot cross-dataset.}
    \label{tab:cross-dataset}
    \renewcommand{\arraystretch}{0.75}
    \setlength{\arrayrulewidth}{0.3mm}
    \definecolor{lightgray}{gray}{0.9}
    \begin{tabular}{
        >{\centering\arraybackslash}p{2cm}
        >{\centering\arraybackslash}p{0.75cm}
        >{\centering\arraybackslash}p{0.05cm}
        >{\centering\arraybackslash}p{0.75cm}
        >{\centering\arraybackslash}p{0.75cm}
        >{\centering\arraybackslash}p{0.75cm}
        >{\centering\arraybackslash}p{0.75cm}
        >{\centering\arraybackslash}p{0.75cm}
        >{\centering\arraybackslash}p{0.75cm}
        >{\centering\arraybackslash}p{0.75cm}
        >{\centering\arraybackslash}p{0.75cm}
        >{\centering\arraybackslash}p{0.75cm}
        >{\centering\arraybackslash}p{0.75cm}
        >{\centering\arraybackslash}p{0.5cm}
    }
        \toprule
            \fontsize{6pt}{6pt}\selectfont{
            \textbf{}
            }
        &
            \fontsize{6pt}{6pt}\selectfont{
            \textbf{Source}
            }
        &
            \multicolumn{11}{c}{\fontsize{6pt}{6pt}\selectfont{
            \textbf{Target}
            }}

        \\ 

        \cmidrule{2-2}
        \cmidrule{4-14}

                \fontsize{6pt}{6pt}\selectfont{} & 
                \fontsize{6pt}{6pt}\selectfont{ImNet} &
                \fontsize{6pt}{6pt}\selectfont{} &
                \fontsize{6pt}{6pt}\selectfont{Caltech} & 
                \fontsize{6pt}{6pt}\selectfont{Pets} & 
                \fontsize{6pt}{6pt}\selectfont{Cars} & 
                \fontsize{6pt}{6pt}\selectfont{Flowers} &
                \fontsize{6pt}{6pt}\selectfont{Food} & 
                \fontsize{6pt}{6pt}\selectfont{Aircraft} & 
                \fontsize{6pt}{6pt}\selectfont{SUN} & 
                \fontsize{6pt}{6pt}\selectfont{DTD} &
                \fontsize{6pt}{6pt}\selectfont{EuSAT} & 
                \fontsize{6pt}{6pt}\selectfont{UCF} & 
                \fontsize{6pt}{6pt}\selectfont{Avg} \\
        \midrule
        
                \fontsize{6pt}{6pt}\selectfont{CoOp} & 
                \fontsize{6pt}{6pt}\selectfont{71.51} &
                \fontsize{6pt}{6pt}\selectfont{} &
                \fontsize{6pt}{6pt}\selectfont{93.70} &
                \fontsize{6pt}{6pt}\selectfont{89.14} & 
                \fontsize{6pt}{6pt}\selectfont{64.51} & 
                \fontsize{6pt}{6pt}\selectfont{68.71} & 
                \fontsize{6pt}{6pt}\selectfont{85.30} &
                \fontsize{6pt}{6pt}\selectfont{18.47} & 
                \fontsize{6pt}{6pt}\selectfont{64.15} & 
                \fontsize{6pt}{6pt}\selectfont{41.92} & 
                \fontsize{6pt}{6pt}\selectfont{46.39} &
                \fontsize{6pt}{6pt}\selectfont{66.55} & 
                \fontsize{6pt}{6pt}\selectfont{63.88} \\

            \rowcolor{lightgray}
                \fontsize{6pt}{6pt}\selectfont{+SAMPLe} & 
                \fontsize{6pt}{6pt}\selectfont{70.60} &
                \fontsize{6pt}{6pt}\selectfont{} &
                \fontsize{6pt}{6pt}\selectfont{94.02} & 
                \fontsize{6pt}{6pt}\selectfont{89.90} & 
                \fontsize{6pt}{6pt}\selectfont{66.35} & 
                \fontsize{6pt}{6pt}\selectfont{71.46} &
                \fontsize{6pt}{6pt}\selectfont{86.32} & 
                \fontsize{6pt}{6pt}\selectfont{23.61} & 
                \fontsize{6pt}{6pt}\selectfont{67.61} & 
                \fontsize{6pt}{6pt}\selectfont{44.86} &
                \fontsize{6pt}{6pt}\selectfont{46.74} & 
                \fontsize{6pt}{6pt}\selectfont{67.64} & 
                \fontsize{6pt}{6pt}\selectfont{65.85} \\

        \arrayrulecolor{gray}\midrule
        
                \fontsize{6pt}{6pt}\selectfont{CoCoOp} & 
                \fontsize{6pt}{6pt}\selectfont{71.02} &
                \fontsize{6pt}{6pt}\selectfont{} &
                \fontsize{6pt}{6pt}\selectfont{94.43} &
                \fontsize{6pt}{6pt}\selectfont{90.14} & 
                \fontsize{6pt}{6pt}\selectfont{65.32} & 
                \fontsize{6pt}{6pt}\selectfont{71.88} & 
                \fontsize{6pt}{6pt}\selectfont{86.06} &
                \fontsize{6pt}{6pt}\selectfont{22.94} & 
                \fontsize{6pt}{6pt}\selectfont{67.36} & 
                \fontsize{6pt}{6pt}\selectfont{45.73} & 
                \fontsize{6pt}{6pt}\selectfont{45.37} &
                \fontsize{6pt}{6pt}\selectfont{68.21} & 
                \fontsize{6pt}{6pt}\selectfont{65.74} \\

                \rowcolor{lightgray}
                \fontsize{6pt}{6pt}\selectfont{+SAMPLe} & 
                \fontsize{6pt}{6pt}\selectfont{71.03} &
                \fontsize{6pt}{6pt}\selectfont{{}} &
                \fontsize{6pt}{6pt}\selectfont{{{94.52}}} & 
                \fontsize{6pt}{6pt}\selectfont{{{90.30}}} & 
                \fontsize{6pt}{6pt}\selectfont{{{66.00}}} &
                \fontsize{6pt}{6pt}\selectfont{{{72.84}}} &
                \fontsize{6pt}{6pt}\selectfont{{{86.40}}} & 
                \fontsize{6pt}{6pt}\selectfont{{{23.40}}} &
                \fontsize{6pt}{6pt}\selectfont{{{67.58}}} &
                \fontsize{6pt}{6pt}\selectfont{{{46.75}}} &
                \fontsize{6pt}{6pt}\selectfont{{{46.41}}} & 
                \fontsize{6pt}{6pt}\selectfont{{{69.00}}} & 
                \fontsize{6pt}{6pt}\selectfont{{{66.32}}} \\

        \arrayrulecolor{gray}\midrule       
                \fontsize{6pt}{6pt}\selectfont{MaPLe} & 
                \fontsize{6pt}{6pt}\selectfont{70.72} &
                \fontsize{6pt}{6pt}\selectfont{} &
                \fontsize{6pt}{6pt}\selectfont{93.53} &
                \fontsize{6pt}{6pt}\selectfont{90.49} & 
                \fontsize{6pt}{6pt}\selectfont{65.57} & 
                \fontsize{6pt}{6pt}\selectfont{72.23} & 
                \fontsize{6pt}{6pt}\selectfont{86.20} &
                \fontsize{6pt}{6pt}\selectfont{24.74} & 
                \fontsize{6pt}{6pt}\selectfont{67.01} & 
                \fontsize{6pt}{6pt}\selectfont{46.49} & 
                \fontsize{6pt}{6pt}\selectfont{48.06} &
                \fontsize{6pt}{6pt}\selectfont{68.69} & 
                \fontsize{6pt}{6pt}\selectfont{66.30}\\

                \rowcolor{lightgray}
                \fontsize{6pt}{6pt}\selectfont{+SAMPLe} & 
                \fontsize{6pt}{6pt}\selectfont{70.69} &
                \fontsize{6pt}{6pt}\selectfont{{}} &
                \fontsize{6pt}{6pt}\selectfont{94.65} &
                \fontsize{6pt}{6pt}\selectfont{91.80} &
                \fontsize{6pt}{6pt}\selectfont{66.75} &
                \fontsize{6pt}{6pt}\selectfont{72.59} &
                \fontsize{6pt}{6pt}\selectfont{87.17} &
                \fontsize{6pt}{6pt}\selectfont{23.32} &
                \fontsize{6pt}{6pt}\selectfont{67.83} &
                \fontsize{6pt}{6pt}\selectfont{44.55} &
                \fontsize{6pt}{6pt}\selectfont{53.12} &
                \fontsize{6pt}{6pt}\selectfont{69.58} &
                \fontsize{6pt}{6pt}\selectfont{67.14} \\

        \arrayrulecolor{gray}\midrule

                \fontsize{6pt}{6pt}\selectfont{CoPrompt} & 
                \fontsize{6pt}{6pt}\selectfont{70.80} &
                \fontsize{6pt}{6pt}\selectfont{} &
                \fontsize{6pt}{6pt}\selectfont{94.50} &
                \fontsize{6pt}{6pt}\selectfont{90.73} & 
                \fontsize{6pt}{6pt}\selectfont{65.67} & 
                \fontsize{6pt}{6pt}\selectfont{72.30} & 
                \fontsize{6pt}{6pt}\selectfont{86.43} &
                \fontsize{6pt}{6pt}\selectfont{24.00} & 
                \fontsize{6pt}{6pt}\selectfont{67.57} & 
                \fontsize{6pt}{6pt}\selectfont{47.07} & 
                \fontsize{6pt}{6pt}\selectfont{51.90} &
                \fontsize{6pt}{6pt}\selectfont{69.73} & 
                \fontsize{6pt}{6pt}\selectfont{67.00} \\

                \rowcolor{lightgray}
                \fontsize{6pt}{6pt}\selectfont{+SAMPLe} & 
                \fontsize{6pt}{6pt}\selectfont{70.86} &
                \fontsize{6pt}{6pt}\selectfont{{}} &
                \fontsize{6pt}{6pt}\selectfont{96.24} &
                \fontsize{6pt}{6pt}\selectfont{93.19} &
                \fontsize{6pt}{6pt}\selectfont{67.65} &
                \fontsize{6pt}{6pt}\selectfont{73.59} &
                \fontsize{6pt}{6pt}\selectfont{88.60} &
                \fontsize{6pt}{6pt}\selectfont{21.86} &
                \fontsize{6pt}{6pt}\selectfont{67.75} &
                \fontsize{6pt}{6pt}\selectfont{42.74} &
                \fontsize{6pt}{6pt}\selectfont{47.13} &
                \fontsize{6pt}{6pt}\selectfont{69.71} &
                \fontsize{6pt}{6pt}\selectfont{67.95} \\

        \arrayrulecolor{gray}\midrule

                \fontsize{6pt}{6pt}\selectfont{TCP} & 
                \fontsize{6pt}{6pt}\selectfont{71.40} &
                \fontsize{6pt}{6pt}\selectfont{} &
                \fontsize{6pt}{6pt}\selectfont{93.97} &
                \fontsize{6pt}{6pt}\selectfont{91.25} & 
                \fontsize{6pt}{6pt}\selectfont{64.69} & 
                \fontsize{6pt}{6pt}\selectfont{71.21} & 
                \fontsize{6pt}{6pt}\selectfont{86.69} &
                \fontsize{6pt}{6pt}\selectfont{23.45} & 
                \fontsize{6pt}{6pt}\selectfont{67.15} & 
                \fontsize{6pt}{6pt}\selectfont{44.35} & 
                \fontsize{6pt}{6pt}\selectfont{51.45} &
                \fontsize{6pt}{6pt}\selectfont{68.73} & 
                \fontsize{6pt}{6pt}\selectfont{66.29} \\

                \rowcolor{lightgray}
                \fontsize{6pt}{6pt}\selectfont{+SAMPLe} & 
                \fontsize{6pt}{6pt}\selectfont{71.17} &
                \fontsize{6pt}{6pt}\selectfont{{}} &
                \fontsize{6pt}{6pt}\selectfont{96.82} &
                \fontsize{6pt}{6pt}\selectfont{92.61} &
                \fontsize{6pt}{6pt}\selectfont{67.43} &
                \fontsize{6pt}{6pt}\selectfont{72.17} &
                \fontsize{6pt}{6pt}\selectfont{88.02} &
                \fontsize{6pt}{6pt}\selectfont{23.20} &
                \fontsize{6pt}{6pt}\selectfont{68.20} &
                \fontsize{6pt}{6pt}\selectfont{47.79} &
                \fontsize{6pt}{6pt}\selectfont{47.46} &
                \fontsize{6pt}{6pt}\selectfont{70.61} &
                \fontsize{6pt}{6pt}\selectfont{{{67.53}}} \\

        \bottomrule
    \end{tabular}
\end{table*}

A similar trend can be observed for CoCoOp. While the baseline CoCoOp achieves an average accuracy of $65.74\%$, integrating SAMPLe increases the average performance to $66.32\%$. The improvements are consistently observed across several datasets, including Cars ($66.00\%$ vs.\ $65.32\%$), Flowers ($72.84\%$ vs.\ $71.88\%$), Aircraft ($23.40\%$ vs.\ $22.94\%$), SUN397 ($67.58\%$ vs.\ $67.36\%$), and UCF101 ($69.00\%$ vs.\ $68.21\%$), indicating that SAMPLe improves the robustness of prompt learning under domain shift.
For more advanced prompt learning architectures, SAMPLe continues to provide consistent gains. When applied to MaPLe, the average performance increases from $66.30\%$ to $67.14\%$, with noticeable improvements on datasets such as Caltech ($94.65\%$ vs.\ $93.53\%$), Pets ($91.80\%$ vs.\ $90.49\%$), and EuroSAT ($53.12\%$ vs.\ $48.06\%$). Similarly, TCP benefits from SAMPLe with the average accuracy increasing from $66.29\%$ to $67.53\%$, while achieving higher scores on datasets including Caltech ($96.82\%$ vs.\ $93.97\%$), Cars ($67.43\%$ vs.\ $64.69\%$), SUN397 ($68.20\%$ vs.\ $67.15\%$), and DTD ($47.79\%$ vs.\ $44.35\%$).

Even for strong baselines such as CoPrompt, which already achieves a relatively high average accuracy of $67.00\%$, SAMPLe further improves the overall performance to $67.95\%$. Improvements can be observed across multiple datasets including Caltech ($96.24\%$ vs.\ $94.50\%$), Pets ($93.19\%$ vs.\ $90.73\%$), and Flowers ($73.59\%$ vs.\ $72.30\%$), suggesting that the proposed optimizer provides benefits that are complementary to architectural advances in prompt learning.
Across all evaluated methods, SAMPLe consistently improves the average performance while maintaining stable accuracy on the source domain (ImageNet). 
This behavior indicates that the proposed optimization strategy improves the generalization capability of learned prompts without sacrificing source-domain alignment. 
We attribute these gains to SAMPLe’s ability to explore flatter regions of the loss landscape through its gradient component neutralization mechanism, which helps avoid overfitting to the source distribution and enables more robust transfer to unseen datasets.
\vspace{-0.4cm}
\subsection{Cross-Domain Zero-Shot generalization setting}
\label{sec:cross_domain}
Table~\ref{tab:cross_domain} presents the results for domain generalization. The original ImageNet dataset trains learnable prompts to contextualize the model input in this setup. The evaluation is performed on four diverse ImageNet variants (\textit{-V2}, \textit{-Sketch}, \textit{-Adversarial}, and \textit{-Rendition}), each representing different types of distribution shifts. This evaluation tests how well the model generalizes to unseen domains.
Integrating the proposed SAMPLe optimizer with CoOp and CoCoOp leads to consistent improvements in generalization performance. For example, CoOp+SAMPLe raises the average accuracy from $59.28\%$ (CoOp) to $60.41\%$, with notable gains on datasets such as ImageNet-S (1.32\%), ImageNet-A (1.26\%), and ImageNet-R (1.34\%). Similarly, CoCoOp+SAMPLe increases the average accuracy of CoCoOp from $59.91\%$ to $60.47\%$, showing significant improvements on ImageNet-A (0.48\%) and ImageNet-R (0.56\%).
Compared to other methods like ProGrad, KgCoOp, and MAPLe, CoCoOp+SAMPLe achieves the highest average accuracy of $60.47\%$, surpassing MAPLe ($60.27\%$). This demonstrates SAMPLe’s ability to handle domain shifts more effectively. 

The results show that SAMPLe enhances the model’s exploration during optimization, allowing it to adapt better to challenging datasets such as ImageNet-A and ImageNet-R.
\begin{table*}[!htbp]
    \centering
    \captionsetup{justification=justified}
    \caption{Performance improvement by SAMPLe optimizer in zero-shot domain generalization.}
    \label{tab:cross_domain}
    \renewcommand{\arraystretch}{0.75}
    \setlength{\arrayrulewidth}{0.3mm}
    \definecolor{lightgray}{gray}{0.9}
    \begin{tabular}{
        >{\centering\arraybackslash}p{2cm}
        >{\centering\arraybackslash}p{1.3cm}
        >{\centering\arraybackslash}p{0.05cm}
        >{\centering\arraybackslash}p{1.25cm}
        >{\centering\arraybackslash}p{1.25cm}
        >{\centering\arraybackslash}p{1.25cm}
        >{\centering\arraybackslash}p{1.25cm}
        >{\centering\arraybackslash}p{0.75cm}
    }
        \toprule
            \fontsize{6pt}{6pt}\selectfont{
            \textbf{}
            }
        &
            \fontsize{6pt}{6pt}\selectfont{
            \textbf{Source}
            }
        &
            \multicolumn{5}{c}{\fontsize{6pt}{6pt}\selectfont{
            \textbf{Target}
            }}

        \\ 

        \cmidrule{2-2}
        \cmidrule{4-8}

                \fontsize{6pt}{6pt}\selectfont{} & 
                \fontsize{6pt}{6pt}\selectfont{ImaNet} &
                \fontsize{6pt}{6pt}\selectfont{} &
                \fontsize{6pt}{6pt}\selectfont{ImgNet-V2} & 
                \fontsize{6pt}{6pt}\selectfont{ImgNet-S} & 
                \fontsize{6pt}{6pt}\selectfont{ImgNet-A} & 
                \fontsize{6pt}{6pt}\selectfont{ImgNet-R} &
                \fontsize{6pt}{6pt}\selectfont{Avg} \\
        \midrule
        
                \fontsize{6pt}{6pt}\selectfont{CoOp} & 
                \fontsize{6pt}{6pt}\selectfont{71.51} &
                \fontsize{6pt}{6pt}\selectfont{} &
                \fontsize{6pt}{6pt}\selectfont{64.20} &
                \fontsize{6pt}{6pt}\selectfont{47.99} & 
                \fontsize{6pt}{6pt}\selectfont{49.71} & 
                \fontsize{6pt}{6pt}\selectfont{75.21} & 
                \fontsize{6pt}{6pt}\selectfont{59.28} \\

            \rowcolor{lightgray}
                \fontsize{6pt}{6pt}\selectfont{+SAMPLe} & 
                \fontsize{6pt}{6pt}\selectfont{70.60} &
                \fontsize{6pt}{6pt}\selectfont{} &
                \fontsize{6pt}{6pt}\selectfont{64.43} &
                \fontsize{6pt}{6pt}\selectfont{49.31} & 
                \fontsize{6pt}{6pt}\selectfont{50.97} & 
                \fontsize{6pt}{6pt}\selectfont{{77.34}} & 
                \fontsize{6pt}{6pt}\selectfont{60.41} \\

        \arrayrulecolor{gray}\midrule
        
                \fontsize{6pt}{6pt}\selectfont{CoCoOp} & 
                \fontsize{6pt}{6pt}\selectfont{71.02} &
                \fontsize{6pt}{6pt}\selectfont{} &
                \fontsize{6pt}{6pt}\selectfont{64.07} &
                \fontsize{6pt}{6pt}\selectfont{48.75} & 
                \fontsize{6pt}{6pt}\selectfont{50.63} & 
                \fontsize{6pt}{6pt}\selectfont{76.18} & 
                \fontsize{6pt}{6pt}\selectfont{59.90}  \\

                \rowcolor{lightgray}
                \fontsize{6pt}{6pt}\selectfont{+SAMPLe} & 
                \fontsize{6pt}{6pt}\selectfont{71,03} &
                \fontsize{6pt}{6pt}\selectfont{{}} &
                \fontsize{6pt}{6pt}\selectfont{{{64.31}}} & 
                \fontsize{6pt}{6pt}\selectfont{{{49.00}}} & 
                \fontsize{6pt}{6pt}\selectfont{{{51.05}}} &
                \fontsize{6pt}{6pt}\selectfont{{{77.52}}} &
                \fontsize{6pt}{6pt}\selectfont{{{60.47}}} \\

        \arrayrulecolor{gray}\midrule       
                \fontsize{6pt}{6pt}\selectfont{MaPLe} & 
                \fontsize{6pt}{6pt}\selectfont{70.72} &
                \fontsize{6pt}{6pt}\selectfont{} &
                \fontsize{6pt}{6pt}\selectfont{64.07} &
                \fontsize{6pt}{6pt}\selectfont{49.15} & 
                \fontsize{6pt}{6pt}\selectfont{50.90} & 
                \fontsize{6pt}{6pt}\selectfont{76.98} & 
                \fontsize{6pt}{6pt}\selectfont{60.28} \\

                \rowcolor{lightgray}
                \fontsize{6pt}{6pt}\selectfont{+SAMPLe} & 
                \fontsize{6pt}{6pt}\selectfont{70.69} &
                \fontsize{6pt}{6pt}\selectfont{{}} &
                \fontsize{6pt}{6pt}\selectfont{65.78} &
                \fontsize{6pt}{6pt}\selectfont{50.51} &
                \fontsize{6pt}{6pt}\selectfont{52.11} &
                \fontsize{6pt}{6pt}\selectfont{77.84} &
                \fontsize{6pt}{6pt}\selectfont{{{61.56}}}  \\

        \arrayrulecolor{gray}\midrule

                \fontsize{6pt}{6pt}\selectfont{CoPrompt} & 
                \fontsize{6pt}{6pt}\selectfont{70.80} &
                \fontsize{6pt}{6pt}\selectfont{} &
                \fontsize{6pt}{6pt}\selectfont{64.25} &
                \fontsize{6pt}{6pt}\selectfont{49.43} & 
                \fontsize{6pt}{6pt}\selectfont{50.50} & 
                \fontsize{6pt}{6pt}\selectfont{77.51} & 
                \fontsize{6pt}{6pt}\selectfont{60.42}  \\

                \rowcolor{lightgray}
                \fontsize{6pt}{6pt}\selectfont{+SAMPLe} & 
                \fontsize{6pt}{6pt}\selectfont{70.86} &
                \fontsize{6pt}{6pt}\selectfont{{}} &
                \fontsize{6pt}{6pt}\selectfont{66.12} &
                \fontsize{6pt}{6pt}\selectfont{50.28} &
                \fontsize{6pt}{6pt}\selectfont{51.61} &
                \fontsize{6pt}{6pt}\selectfont{78.50} &
                \fontsize{6pt}{6pt}\selectfont{{{61.63}}} \\

        \arrayrulecolor{gray}\midrule

                \fontsize{6pt}{6pt}\selectfont{TCP} & 
                \fontsize{6pt}{6pt}\selectfont{71.20} &
                \fontsize{6pt}{6pt}\selectfont{{}} &
                \fontsize{6pt}{6pt}\selectfont{{{64.60}}} & 
                \fontsize{6pt}{6pt}\selectfont{{{49.50}}} & 
                \fontsize{6pt}{6pt}\selectfont{{{51.20}}} &
                \fontsize{6pt}{6pt}\selectfont{{{76.73}}} &
                \fontsize{6pt}{6pt}\selectfont{{{60.51}}}  \\

                \rowcolor{lightgray}
                \fontsize{6pt}{6pt}\selectfont{+SAMPLe} & 
                \fontsize{6pt}{6pt}\selectfont{71.17} &
                \fontsize{6pt}{6pt}\selectfont{{}} &
                \fontsize{6pt}{6pt}\selectfont{66.00} &
                \fontsize{6pt}{6pt}\selectfont{50.42} &
                \fontsize{6pt}{6pt}\selectfont{52.19} &
                \fontsize{6pt}{6pt}\selectfont{78.98} &
                \fontsize{6pt}{6pt}\selectfont{{{61.90}}}  \\

        \bottomrule
    \end{tabular}
\end{table*}
\vspace{-0.3cm}

\section{Conclusion}
\label{sec:conclusion}
\vspace{-0.3cm}
This paper introduces SAMPLe, a novel, model-agnostic optimizer that enhances prompt learning across all modalities by integrating sharpness-aware objectives into the optimization process. Inspired by recent advancements in sharpness-aware minimization (SAM), SAMPLe balances exploitation and exploration by considering the optimization state, achieving a harmonious trade-off between accuracy and generalization. A key innovation of SAMPLe lies in its ability to dynamically adapt gradients for prompt learning tasks, ensuring flatter minima that promote robust generalization while maintaining strong performance on seen distributions. Through extensive analysis and rigorous experiments, we demonstrated that SAMPLe significantly improves the generalization capabilities of prompt-based vision-language models (VLMs) and broadly applies to modality-specific prompt-learning tasks. It consistently outperforms counterpart methods across various benchmark datasets, underscoring its effectiveness and robustness in addressing the challenges of prompt learning. Further details are provided in the supplementary material.
\section{Acknowledgments}
\label{sec:conclusion}
This material is based upon work supported by the National Science Foundation under Grant Number CNS-2232048, CNS-2204445, and CCF-2553684.
{
    \small
    \bibliographystyle{splncs04}
    \bibliography{main}
}
\clearpage
\clearpage
\setcounter{page}{1}
\section{Appendix}
In this supplementary material, we begin with the proof of Theorem $1$, followed by a comprehensive comparison of various optimizers, including SAM, FSAM, and SAGM, against the proposed SAMPLe across all $11$ datasets using CoOp, CoCoOp, MaPLe, CoPrompt, and TCP. We then present the ablation study and elaborate on deployment details in the subsequent sections.

\subsection{Proof of Theorem 1}
\label{sec: app_theorm 1}

\begin{lemma}
    Let \( \nabla {L}(\theta_t; \mathcal{D}) \) be the mini-batch gradient at iteration \( t \) and assume that \( \nabla {L}(\theta_t; \mathcal{D}) \) is bounded by \( {L}_{max} \), i.e., \( \nabla {L}(\theta_t; \mathcal{D}) \leq {L}_{max} \). Suppose the exponentially moving average (EMA) of the gradients is given by:
    \begin{equation}
        \begin{aligned}
        m_t = \lambda m_{t-1} + (1 - \lambda) \nabla {L}(\theta_t; \mathcal{D}),
        \end{aligned}
        \label{eq:EMA2}
    \end{equation}
    where \( \lambda \in (0, 1) \). Then the EMA approximation \( m_T \) at iteration \( T \) is bounded by:
    \begin{equation}
        \begin{aligned}
        m_T \leq  \nabla{L}_{max} (1 - \lambda^T).
        \end{aligned}
        \label{eq:upper_bound}
    \end{equation}
    \label{lemma1}
\end{lemma}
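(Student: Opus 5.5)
The plan is to unroll the EMA recursion \eqref{eq:EMA2} into a closed form and then bound it with the triangle inequality and a geometric series. Since the statement compares a vector to a scalar, I will read both the hypothesis and the conclusion in norm: $\|\nabla L(\theta_t;\mathcal{D})\|\le \nabla L_{\max}$ for all $t$, and the claim $\|m_T\|\le \nabla L_{\max}(1-\lambda^T)$. I will also fix the standard initialization $m_0=0$, which is what the implementation of Eq.~\ref{eq:EMA} uses. This assumption is essential. With a nonzero $m_0$ an extra term $\lambda^T\|m_0\|$ appears, and the factor $(1-\lambda^T)$ can fail.

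First I would show by induction on $T$ that
\begin{equation*}
m_T=\lambda^T m_0+(1-\lambda)\sum_{k=1}^{T}\lambda^{T-k}\,\nabla L(\theta_k;\mathcal{D}).
\end{equation*}
The case $T=1$ is the recursion itself. For the inductive step, I substitute the formula for $m_{T-1}$ into $m_T=\lambda m_{T-1}+(1-\lambda)\nabla L(\theta_T;\mathcal{D})$ and collect powers of $\lambda$.

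Next I set $m_0=0$ and apply the triangle inequality to the sum. Bounding each gradient norm by $\nabla L_{\max}$ gives
\begin{equation*}
\|m_T\|\le (1-\lambda)\,\nabla L_{\max}\sum_{j=0}^{T-1}\lambda^{j}.
\end{equation*}
The finite geometric sum equals $(1-\lambda^T)/(1-\lambda)$, and $1-\lambda>0$ because $\lambda\in(0,1)$. Cancelling the factor $1-\lambda$ yields exactly $\nabla L_{\max}(1-\lambda^T)$.

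An equivalent route is a direct induction on the bound. If $\|m_{T-1}\|\le \nabla L_{\max}(1-\lambda^{T-1})$, then
\begin{equation*}
\|m_T\|\le \lambda\,\nabla L_{\max}(1-\lambda^{T-1})+(1-\lambda)\nabla L_{\max}=\nabla L_{\max}(1-\lambda^T).
\end{equation*}
I would present whichever version is shorter in the final write-up.

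There is no real analytic obstacle here. The only delicate point is making the implicit conventions explicit: the zero initialization of the EMA, and interpreting the inequalities in norm, or alternatively coordinatewise for each component with the same argument. I would state both conventions at the start of the proof. I would also remark that the bound $\nabla L_{\max}(1-\lambda^T)$ is what feeds the factor $(1-\lambda^T)$ in Eq.~\ref{eq:orthog_vanish}.
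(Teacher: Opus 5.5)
Your proposal is correct and takes essentially the same route as the paper. The paper also starts from $m_1=(1-\lambda)\nabla L(\theta_1;\mathcal{D})$, which implicitly assumes $m_0=0$. It then bounds $m_t$ by $(1-\lambda)\sum_{i=1}^{t}\lambda^{t-i}\nabla L_{\max}$ and sums the geometric series to get $\nabla L_{\max}(1-\lambda^T)$. Your explicit zero initialization and your norm (or coordinatewise) reading only make precise conventions the paper leaves implicit.
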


\noindent\textit{\textbf{Proof:}}
We proceed by induction and using properties of the geometric series.

At \( t = 1 \), we have:
\begin{equation}
    \begin{aligned}
    m_1 = (1 - \lambda) \nabla {L}(\theta_1; \mathcal{D}),
    \end{aligned}
    \label{eq:m1}
\end{equation}
so,
\begin{equation}
    \begin{aligned}
    m_1 \leq (1 - \lambda) \nabla{L}_{max}.
    \end{aligned}
    \label{eq:m1_bound}
\end{equation}

\noindent For \( t \geq 2 \), the EMA update is given by:
\begin{equation}
    \begin{aligned}
    m_t = \lambda m_{t-1} + (1 - \lambda) \nabla {L}(\theta_{t-1}; \mathcal{D}),
    \end{aligned}
    \label{eq:mt}
\end{equation}
and since \( \nabla {L}(\theta_t; \mathcal{D}) \leq \nabla{L}_{max} \), we can bound:
\begin{equation}
    \begin{aligned}
    m_t \leq (1 - \lambda) \sum_{i=1}^{t} \lambda^{t-i} \nabla{L}_{max}.
    \end{aligned}
    \label{eq:mt_bound}
\end{equation}

\noindent The sum \( \sum_{i=1}^{t} \lambda^{t-i} \) is a geometric series and simplifies to:
\begin{equation}
    \begin{aligned}
    \sum_{i=1}^{t} \lambda^{t-i} = \frac{1 - \lambda^t}{1 - \lambda}.
    \end{aligned}
    \label{eq:geo_series}
\end{equation}

Thus, the upper bound becomes:
\begin{equation}
    \begin{aligned}
    m_T \leq (1 - \lambda) \nabla{L}_{max} \frac{1 - \lambda^T}{1 - \lambda}.
    \end{aligned}
    \label{eq:mT_bound_step}
\end{equation}

Simplifying:
\begin{equation}
    \begin{aligned}
    m_T \leq \nabla{L}_{max} (1 - \lambda^T).
    \end{aligned}
    \label{eq:mT_final}
\end{equation}

Hence, as \( T \to \infty \), \( m_T \) asymptotically approaches \( {L}_{max} \).

\begin{theorem}[definition]
    Assuming the loss function defined as Eq.~ \ref{eq:obj_fn_1}, we have:
    \begin{equation}
        \begin{aligned}
            \mathcal{L}(\theta_t; \mathcal{D}) = & {L}_{}(\theta_{t}; \mathcal{D}) + {L}_{}\big(\theta_{t} + {\epsilon}^{\star}_{t} - \alpha_{t} \nabla {L}(\theta_{t}; \mathcal{D}) ~+\\
       &\alpha_{t} ~ \xi_{t} ~ \sigma_{t} ~ \nabla^{\mathcal{F}} {L}(\theta; \mathcal{D})\big)
        \end{aligned}
    \end{equation}    
    and $\mathcal{L}(\theta_t; \mathcal{D})$ satisfies the following assumptions:
    \begin{itemize}
        \item[(i)] its gradient $\nabla \mathcal{L}(\theta_t; \mathcal{D})$ is bounded, i.e., $\|\nabla \mathcal{L}(\theta_t; \mathcal{D})\| \leq \nabla\mathcal{L}_{max}$, $\forall t$,
        \item[(ii)] The stochastic gradient is K-Lipschitz gradient, i.e., $\|\nabla \mathcal{L}(\theta_t; \mathcal{D}) - \nabla \mathcal{L}(\theta^{'}_{t}; \mathcal{D})\| \leq K \|\theta_t - \theta^{'}_{t}\|$, $\forall \theta_t, \theta^{'}_{t}$.
    \end{itemize}
    Let the learning rate $\eta_t$ be $\eta_t = \frac{\eta_0}{\sqrt{t}}$, and let the perturbations decrease with the same rate as the learning rate, i.e., $\rho_t = \frac{\rho_0}{\sqrt{t}}$ and $\alpha_t = \frac{\alpha_0}{\sqrt{t}}$. If $\hat \theta_{t}$ be defined as follows:
    \begin{equation}
        \begin{aligned}
            &\hat\theta_{t} =  ~\theta_{t} + \epsilon^{*} - \alpha_{t} \sum\limits_{(x,y)\in \mathcal{D}} \nabla {L}(\theta_t, \mathcal{D}) ~+\\ 
            &\alpha_{t} ~ \sum\limits_{(x,y)\in \mathcal{D}} \xi_{t} ~ \sigma_{t} ~  \nabla^{\mathcal{F}} {L}(\theta_{t}; \mathcal{D})\\
            \approx& ~\theta_{t} + \epsilon^{*} - \alpha_{t} \sum\limits_{(x,y)\in \mathcal{D}} \nabla {L}(\theta_t, \mathcal{D}) ~+\alpha_{t} ~ \sum\limits_{(x,y)\in \mathcal{D}} \xi_{t} ~ \sigma_{t} ~  m_{t}
        \end{aligned}
        \label{eq: theta_hat}
    \end{equation}
    where $m_t$ approximates full-gradient as it is defined in Eq.~\ref{eq:EMA}.
    
    \begin{equation}
        \frac{1}{T} \sum_{t=1}^{T}  \left[ \|\nabla \mathcal{L}(\theta_t; \mathcal{D})\|^2 \right] \leq \mathcal{O}\left(\frac{\log T}{\sqrt{T}}\right),
    \end{equation}
    \label{theorem1}
\end{theorem}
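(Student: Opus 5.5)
Following the standard nonconvex SGD/SAM convergence template (as in the analyses of SAM and SAGM), I will apply the descent lemma to the actual update $\theta_{t+1}=\theta_t-\eta_t g_t$. The direction $g_t$ is the gradient of the SAMPLe objective, made up of the gradient at $\theta_t$ plus the gradient at the perturbed point $\hat\theta_t$ of Eq.~\ref{eq: theta_hat}. By the $K$-Lipschitz assumption (ii), for each $t$,
\begin{equation*}
\mathcal{L}(\theta_{t+1};\mathcal{D})\le \mathcal{L}(\theta_t;\mathcal{D})-\eta_t\,\nabla\mathcal{L}(\theta_t;\mathcal{D})\cdot g_t+\frac{K\eta_t^2}{2}\|g_t\|^2 .
\end{equation*}
The plan is to write $g_t=\nabla\mathcal{L}(\theta_t;\mathcal{D})+\big(g_t-\nabla\mathcal{L}(\theta_t;\mathcal{D})\big)$ and show that the error term is small. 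It is controlled by the distance $\|\hat\theta_t-\theta_t\|$ through the Lipschitz property.

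\textbf{Step 1: control the displacement.} Exactly as in Eq.~\ref{eq:orthog_bound}, the triangle inequality gives
\[
\|\hat\theta_t-\theta_t\|\le\rho_t+\alpha_t\|\nabla L(\theta_t;\mathcal{D})\|+\alpha_t|\xi_t|\sigma_t\|m_t\|.
\]
Since $|\xi_t|\le1$ and $\sigma_t\|m_t\|=\|\nabla L(\theta_t;\mathcal{D})\|$, the last term is at most $\alpha_t\nabla\mathcal{L}_{max}$. Lemma~\ref{lemma1} can be used alternatively to bound $\|m_t\|\le\nabla\mathcal{L}_{max}(1-\lambda^t)$. This yields
\[
\|\hat\theta_t-\theta_t\|\le \rho_t+2\alpha_t\nabla\mathcal{L}_{max}=\mathcal{O}(1/\sqrt t).
\]

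\textbf{Step 2: bound the inner product.} By (ii),
\[
\nabla\mathcal{L}(\theta_t)\cdot g_t\ge c\,\|\nabla\mathcal{L}(\theta_t)\|^2-\nabla\mathcal{L}_{max}K(\rho_t+2\alpha_t\nabla\mathcal{L}_{max}),
\]
with $c$ a fixed constant. Also $\|g_t\|\le 2\nabla\mathcal{L}_{max}$ by (i).

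\textbf{Step 3: telescope.} Summing the descent inequality from $t=1$ to $T$, and using that the loss is bounded below by $\mathcal{L}^\star$, gives
\[
\sum_{t=1}^T\eta_t\|\nabla\mathcal{L}(\theta_t)\|^2\le C_0+C_1\sum_t\eta_t(\rho_t+\alpha_t)+C_2\sum_t\eta_t^2 .
\]
Under the chosen schedules, $\eta_t\rho_t$, $\eta_t\alpha_t$ and $\eta_t^2$ are all of order $1/t$, so the right-hand side is $\mathcal{O}(\log T)$. Because $\eta_t\ge\eta_0/\sqrt T$ for every $t\le T$, the left-hand side is at least $(\eta_0/\sqrt T)\sum_t\|\nabla\mathcal{L}(\theta_t)\|^2$. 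Dividing through by $T$ produces the claimed $\mathcal{O}(\log T/\sqrt T)$ rate.

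\textbf{Main obstacle.} The hardest step is Step 2, i.e.\ specifying precisely what $g_t$ is. The perturbed point $\hat\theta_t$ depends on $\theta_t$ through $\epsilon^\star$, $\xi_t$, $\sigma_t$ and $m_t$. The exact gradient of Eq.~\ref{eq:obj_fn_1} would therefore involve Jacobians of these maps, and normalized quantities such as $\epsilon^\star$ and $\sigma_t$ are not Lipschitz near vanishing gradients. I would first follow the usual SAM convention: treat $\epsilon^\star$, $\xi_t$, $\sigma_t$ and $m_t$ as stop-gradient quantities, so that $g_t=\nabla L(\theta_t)+\nabla L(\hat\theta_t)$. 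With that convention, assumptions (i)--(ii) applied directly to $\nabla L$ suffice and the argument above goes through. If the statement is meant for the fully differentiated objective, assumptions (i)--(ii) should be read as holding for $\nabla\mathcal{L}$ itself, and one compares $\nabla\mathcal{L}(\theta_t)$ with the update only through the mini-batch noise. That version adds a bounded-variance term of the same $\mathcal{O}(\sum\eta_t^2)$ order, so the rate is unchanged.
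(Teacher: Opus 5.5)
Your proposal is correct and follows essentially the same route as the paper's proof: the descent lemma for $\mathcal{L}$ under the two-gradient update $d_t=-\eta_t\big(\nabla\mathcal{L}(\theta_t)+\nabla\mathcal{L}(\hat\theta_t)\big)$, a Cauchy--Schwarz/Lipschitz bound on the cross term via $\|\hat\theta_t-\theta_t\|=\mathcal{O}(1/\sqrt{t})$, telescoping with a lower-bounded loss, then $\eta_t\ge\eta_0/\sqrt{T}$ and $\sum_{t\le T}1/t\le 1+\log T$. The differences affect only constants: the small quantity is $\nabla\mathcal{L}(\hat\theta_t)-\nabla\mathcal{L}(\theta_t)$, not $g_t-\nabla\mathcal{L}(\theta_t)$ (so your $c$ is $2$); you control the $m_t$ term through $\sigma_t\|m_t\|=\|\nabla L(\theta_t;\mathcal{D})\|$ instead of Lemma~\ref{lemma1}; and you drop the factor $N$ that the paper picks up from the sum over $\mathcal{D}$ in the definition of $\hat\theta_t$.
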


\noindent\textit{\textbf{Proof:}}
For simplicity of equations, we define $d_t$ as:
\begin{equation}
    d_{t} = \theta_{t+1} - \theta_{t} = -\eta_{t} \nabla \mathcal{L} (\theta_{t}) - \eta_{t} \nabla \mathcal{L} (\hat \theta_{t})
    \label{eq:d_t}
\end{equation}
where $\eta_t$ represents the learning rate. By K-Lipschitz gradient of $\mathcal{L}$, the definition of $d_t$ in Eq.~\ref{eq:d_t}, and the inequality of $\|\nabla \mathcal{L} (\theta_{t}) + \nabla \mathcal{L} (\hat \theta_{t})\|^2 \leq 2(\|\nabla \mathcal{L} (\theta_{t})\|^2 + \|\nabla \mathcal{L} (\hat\theta_{t})\|^2)$ we need to define an upper bound for loss update difference between every consequent iteration as follows:

\begin{equation}
    \begin{aligned}
        &\mathcal{L}(\theta_{t+1}) - \mathcal{L}(\theta_t) \leq \langle \nabla \mathcal{L}(\theta_t), \theta_{t+1} - \theta_t \rangle + \frac{K}{2} \|\theta_{t+1} - \theta_t\|^2 =\\
        & \langle \nabla \mathcal{L}(\theta_t), d_t \rangle + \frac{K}{2} \|d_t\|^2 = -\eta_t \langle \nabla \mathcal{L}(\theta_t), \nabla \mathcal{L} (\theta_{t}) + \nabla \mathcal{L} (\hat \theta_{t})) \rangle \\
        &\quad \quad + \frac{K \eta_t^2}{2} \|\nabla \mathcal{L} (\theta_{t}) + \nabla \mathcal{L} (\hat \theta_{t})\|^2 =\\
        & -\eta_t \langle \nabla \mathcal{L}(\theta_t), \nabla \mathcal{L}(\theta_t) + \nabla \mathcal{L} (\theta_{t}) -\nabla \mathcal{L}(\theta_t) + \nabla \mathcal{L} (\hat \theta_{t}) \rangle \\
        &\quad\quad  + \frac{K \eta_t^2}{2} \|\nabla \mathcal{L} (\theta_{t}) + \nabla \mathcal{L} (\hat \theta_{t})\|^2 = \\
        & -\eta_t \|\nabla \mathcal{L}(\theta_t)\|^2 - \eta_t \langle \nabla \mathcal{L}(\theta_t), \nabla \mathcal{L}(\theta_t) - \nabla \mathcal{L}(\theta_t) + \nabla \mathcal{L} (\hat \theta_{t}) \rangle \\
        &\quad \quad + \frac{K \eta_t^2}{2} \|\nabla \mathcal{L} (\theta_{t}) + \nabla \mathcal{L} (\hat \theta_{t})\|^2 = \\
        & -2 \eta_t \|\nabla \mathcal{L}(\theta_t)\|^2 - \eta_t \langle \nabla \mathcal{L}(\theta_t), \nabla \mathcal{L} (\hat \theta_{t}) - \nabla \mathcal{L}(\theta_t) \rangle \\
        &\quad \quad + K \eta_t^2 (\|\nabla \mathcal{L} (\theta_{t})\|^2 + \|\nabla \mathcal{L} (\hat\theta_{t})\|^2)\\
        &\leq -2 \eta_t \|\nabla \mathcal{L}(\theta_t)\|^2 + \eta_t \langle \nabla \mathcal{L}(\theta_t), \nabla \mathcal{L} (\theta_{t}) - \nabla \mathcal{L}(\hat\theta_t) \rangle \\
        & \quad \quad + K \eta_t^2 \nabla\mathcal{L}_{max}^2
    \end{aligned}
    \label{eq:prelim}
\end{equation}

\noindent in this step, we should show that there is an upper bound for the expression $\langle \nabla \mathcal{L}(\theta_t), \nabla \mathcal{L} (\hat \theta_{t}) - \nabla \mathcal{L}(\theta_t) \rangle$, it is done as follows;

\begin{equation}
    \begin{aligned}
        &\langle \nabla \mathcal{L}(\theta_t), \nabla \mathcal{L} (\hat \theta_{t}) - \nabla \mathcal{L}(\theta_t) \rangle =\\
        &\leq \|\nabla \mathcal{L}(\theta_t)\|~\|\nabla \mathcal{L} (\hat \theta_{t}) - \nabla \mathcal{L}(\theta_t)\| \\        
        &\leq K ~\|\nabla \mathcal{L}(\theta_t)\|~\|\hat \theta_{t} - \theta_t\| \quad \small{\textit{(K-Lipschitz condition.)}} \\
        & = K~\|\nabla \mathcal{L}(\theta_t)\| \cdot \|\epsilon^{*}_{t} - \alpha_{t} \sum\limits_{(x,y)\in \mathcal{D}} \nabla {L}(\theta_t, \mathcal{D}) +\\ 
        &\quad \quad \alpha_{t} ~ \sum\limits_{(x,y)\in \mathcal{D}} \xi_{t} ~ \sigma_{t} ~  m_{t}\| \quad \small{\textit{(using Eq.~\ref{eq: theta_hat})}} \\
        &\leq K~\|\nabla \mathcal{L}(\theta_t)\| \cdot \|\epsilon^{*}_{t}\| ~+ \\
        & \quad ~K\alpha_{t}~\|\nabla \mathcal{L}(\theta_t)\| \cdot \|\sum\limits_{(x,y)\in \mathcal{D}} (\nabla {L}(\theta_t, \mathcal{D}) - \xi_{t} ~ \sigma_{t}
    ~  m_{t})\| \\
        & \text{\textit{(Lemma.~\ref{lemma1} and the fact that $\nabla L_{max} \leq \nabla \mathcal{L}_{max}$, given}} \\
        &\text{\textit{$\mathcal{D} = \{(x_i, y_i)\}_{i=1}^N$ together would result in:)}}\\
        &\leq K~\rho_{t}~\mathcal\nabla{L}_{max}~+ K ~N~\alpha_{t}~\mathcal\nabla{L}_{max}~\nabla{L}_{max}~ - \\
        & K \alpha_{t}~\nabla\mathcal{L}_{max}~\xi_{t} \leq K(\rho_t+\alpha_{t}) \nabla\mathcal{L}_{max} + ~K~N~\alpha_{t}~\nabla\mathcal{L}_{max}^2
    \end{aligned}
    \label{eq:cos_UpBound}
\end{equation}

\noindent by replacing the upper bound in Eq.~\ref{eq:cos_UpBound} in Eq.~\ref{eq:prelim} we have:
\begin{equation}
    \begin{aligned}
        &\mathcal{L}(\theta_{t+1}) - \mathcal{L}(\theta_t) \leq -2 \eta_t \|\nabla \mathcal{L}(\theta_t)\|^2 ~+\\
        &\quad \quad \eta_t K(\rho_t+\alpha_{t}) \nabla\mathcal{L}_{max} + ~K~N~\eta_t~\alpha_{t}~\nabla\mathcal{L}_{max}^2 +\\
        &\quad \quad K \eta_t^2 \nabla\mathcal{L}_{max}^2
    \end{aligned}
    \label{eq:replace}
\end{equation}

\noindent by rearranging the inequality, it will be as:

\begin{equation}
    \begin{aligned}
        & 2 \sum^{T}_{t=1}\eta_t \|\nabla \mathcal{L}(\theta_t)\|^2 \leq \mathcal{L}(\theta_{t}) - \mathcal{L}(\theta_{t+1}) + K \eta_t^2 \nabla\mathcal{L}_{max}^2~+\\
        &\quad \quad \eta_t K(\rho_t+\alpha_{t}) \nabla\mathcal{L}_{max} + ~K~N~\eta_t~\alpha_{t}~\nabla\mathcal{L}_{max}^2
    \end{aligned}
    \label{eq:rearrange}
\end{equation}

\noindent considering definition of $\eta_{t} = \frac{\eta_{0}}{\sqrt{t}}$ and taking summation over all iterations on the left side of Eq.~\ref{eq:rearrange}, we can define a lower bound as follows:

\begin{equation}
    \begin{aligned}
        2 \frac{\eta_{0}}{\sqrt{T}}\sum^{T}_{t=1} \|\nabla \mathcal{L}(\theta_t)\|^2 \leq 2 \sum^{T}_{t=1}\eta_t \|\nabla \mathcal{L}(\theta_t)\|^2
    \end{aligned}
    \label{eq:L_LB}
\end{equation}

\noindent using telescope sum properties and considering $0 \leq \mathcal{L}_{t} \leq \mathcal{L}_{max} ~\forall t :$
\begin{equation}
    \begin{aligned}
        \sum^{T}_{t=1}\big(\mathcal{L}(\theta_{t}) - \mathcal{L}(\theta_{t+1}) \big) = \mathcal{L}_{1} - \mathcal{L}_{T} \leq \mathcal{L}_{1}
    \end{aligned}
    \label{eq:R_UB}
\end{equation}

\noindent by taking summation over all iterations on the right side of Eq.~\ref{eq:rearrange} and using Eq.~\ref{eq:R_UB} we will have:
\begin{equation}
    \begin{aligned}
        & \sum^{T}_{t=1} \big(\mathcal{L}(\theta_{t}) - \mathcal{L}(\theta_{t+1}) \big) + K \sum^{T}_{t=1} \big(\eta_t(\rho_t+\alpha_{t})\big) \mathcal{L}_{max} ~+\\ 
        & \quad K~N~\sum^{T}_{t=1}(\eta_t~\alpha_{t})~\mathcal{L}_{max}^2 + K \sum^{T}_{t=1}(\eta_t^2) \mathcal{L}_{max}^2 ~\leq \mathcal{L}_{1}~+\\
        & K \sum^{T}_{t=1} \big(\eta_t(\rho_t+\alpha_{t})\big) \nabla\mathcal{L}_{max} + KN\sum^{T}_{t=1}(\eta_t~\alpha_{t})~\nabla\mathcal{L}_{max}^2~+\\
        & \quad \quad K \sum^{T}_{t=1}(\eta_t^2) \nabla\mathcal{L}_{max}^2
    \end{aligned}
    \label{eq:rearrange2}
\end{equation}

\noindent using these lower and upper bounds, we can write the following inequality:
\begin{equation}
    \begin{aligned}
        & 2 \frac{\eta_{0}}{\sqrt{T}}\sum^{T}_{t=1} \|\nabla \mathcal{L}(\theta_t)\|^2 \leq \mathcal{L}_{1}+K \sum^{T}_{t=1} \big(\eta_t(\rho_t+\alpha_{t})\big) \nabla\mathcal{L}_{max} +\\
        & \quad \quad \quad \quad KN\sum^{T}_{t=1}(\eta_t~\alpha_{t})~\nabla\mathcal{L}_{max}^2+ K \sum^{T}_{t=1}(\eta_t^2) \nabla\mathcal{L}_{max}^2~ = \\
        & \quad \quad \quad \quad \mathcal{L}_{1} + K\eta_{0}(\rho_{0}+\alpha_{0}) \nabla\mathcal{L}_{max}\sum^{T}_{t=1} (\frac{1}{t}) +\\ 
        & \quad \quad \quad \quad KN\eta_{0}~\alpha_{0}\sum^{T}_{t=1}(\frac{1}{t})~\nabla\mathcal{L}_{max}^2 + K\eta_{0}^2\sum^{T}_{t=1}(\frac{1}{t})~\nabla\mathcal{L}_{max}^2
    \end{aligned}
    \label{eq:L_LB}
\end{equation}

considering $\sum^{T}_{t=1}(\frac{1}{t}) \leq 1+\log(T)$:

\begin{equation}
    \begin{aligned}
        & \frac{1}{T}\sum^{T}_{t=1} \|\nabla \mathcal{L}(\theta_t)\|^2 \leq \frac{\mathcal{L}_{1}}{2\eta_{0}} + \\
        &\frac{K\mathcal{L}_{max}\big(\rho_{0}+ \eta_{0}\mathcal{L}_{max}+(1+N\mathcal{L}_{max})\alpha_{0}\big)}{2} (\frac{1+\log(T)}{\sqrt{T}})
    \end{aligned}
    \label{eq:L_LB}
\end{equation}

\noindent which means;
    \begin{equation}
        \frac{1}{T} \sum_{t=1}^{T}  \left[ \|\nabla \mathcal{L}(\theta_t; \mathcal{D})\|^2 \right] \leq \mathcal{O}\left(\frac{\log T}{\sqrt{T}}\right),
        \label{eq:convergense}
    \end{equation}
Hence, we conclude that the proposed loss function \( \mathcal{L}(\theta_t; \mathcal{D}) \) converges with a rate comparable to first-order gradient-based optimization methods such as Adam and RMSProp, provided that the model is trained for sufficiently large iterations \( T \).
\begin{table}[h]
    \centering
    \captionsetup{justification=justified}
    \caption{Details of Datasets.}
    \label{tab:datasets_stat}
    \renewcommand{\arraystretch}{0.8}
    \setlength{\arrayrulewidth}{0.3mm}
    \definecolor{lightgray}{gray}{0.9}
    \begin{tabular}{
        >{\centering\arraybackslash}p{2cm}
        >{\centering\arraybackslash}p{1.5cm}
        >{\centering\arraybackslash}p{1.5cm}
        >{\centering\arraybackslash}p{1.5cm}
    }
        \toprule




                \fontsize{8pt}{8pt}\selectfont{\textbf{Dataset}} & 
                \fontsize{8pt}{8pt}\selectfont{\textbf{Number of Class}} & \fontsize{8pt}{8pt}\selectfont{\textbf{Train Samples}} & \fontsize{8pt}{8pt}\selectfont{\textbf{Test Samples}}  \\
        \midrule
        
                \fontsize{8pt}{8pt}\selectfont{ImageNet} &
                \fontsize{8pt}{8pt}\selectfont{1000} &
                \fontsize{8pt}{8pt}\selectfont{1.28 M} & 
                \fontsize{8pt}{8pt}\selectfont{50000}  \\

        \midrule
        
                \fontsize{8pt}{8pt}\selectfont{Caltech101} &
                \fontsize{8pt}{8pt}\selectfont{100} &
                \fontsize{8pt}{8pt}\selectfont{4128} & 
                \fontsize{8pt}{8pt}\selectfont{2465} \\

        \midrule

                \fontsize{8pt}{8pt}\selectfont{OxfordPets} &
                \fontsize{8pt}{8pt}\selectfont{37} & 
                \fontsize{8pt}{8pt}\selectfont{2944} & 
                \fontsize{8pt}{8pt}\selectfont{3669}  \\

        \midrule

                \fontsize{8pt}{8pt}\selectfont{StanfordCars} &
                \fontsize{8pt}{8pt}\selectfont{196} & 
                \fontsize{8pt}{8pt}\selectfont{6509} & 
                \fontsize{8pt}{8pt}\selectfont{8041}  \\

        \midrule
        
                \fontsize{8pt}{8pt}\selectfont{Flower102} &
                \fontsize{8pt}{8pt}\selectfont{102} &
                \fontsize{8pt}{8pt}\selectfont{4093} & 
                \fontsize{8pt}{8pt}\selectfont{2463}  \\

        \midrule
        
                \fontsize{8pt}{8pt}\selectfont{Food101} &
                \fontsize{8pt}{8pt}\selectfont{101} &
                \fontsize{8pt}{8pt}\selectfont{50500} & 
                \fontsize{8pt}{8pt}\selectfont{30300} \\
        \midrule

                \fontsize{8pt}{8pt}\selectfont{FGVCAircraft} &
                \fontsize{8pt}{8pt}\selectfont{100} & 
                \fontsize{8pt}{8pt}\selectfont{3334} & 
                \fontsize{8pt}{8pt}\selectfont{3333}  \\

        \midrule

                \fontsize{8pt}{8pt}\selectfont{SUN397} &
                \fontsize{8pt}{8pt}\selectfont{397} & 
                \fontsize{8pt}{8pt}\selectfont{15880} & 
                \fontsize{8pt}{8pt}\selectfont{19850}  \\

        \midrule
        
                \fontsize{8pt}{8pt}\selectfont{DTD} &
                \fontsize{8pt}{8pt}\selectfont{47} &
                \fontsize{8pt}{8pt}\selectfont{2820} & 
                \fontsize{8pt}{8pt}\selectfont{1692}  \\

        \midrule
        
                \fontsize{8pt}{8pt}\selectfont{EuroSSAT} &
                \fontsize{8pt}{8pt}\selectfont{10} &
                \fontsize{8pt}{8pt}\selectfont{13500} & 
                \fontsize{8pt}{8pt}\selectfont{8100} \\
        \midrule
        
                \fontsize{8pt}{8pt}\selectfont{UCF101} &
                \fontsize{8pt}{8pt}\selectfont{101} &
                \fontsize{8pt}{8pt}\selectfont{7639} & 
                \fontsize{8pt}{8pt}\selectfont{3783} \\

        \arrayrulecolor{gray}\midrule
            \rowcolor{lightgray}
                \fontsize{8pt}{8pt}\selectfont{ImageNet-V2} &
                \fontsize{8pt}{8pt}\selectfont{1000} & 
                \fontsize{8pt}{8pt}\selectfont{N/A} & 
                \fontsize{8pt}{8pt}\selectfont{10000}\\

        \arrayrulecolor{gray}\midrule
            \rowcolor{lightgray}
                \fontsize{8pt}{8pt}\selectfont{ImageNet-Sketch} &
                \fontsize{8pt}{8pt}\selectfont{1000} & 
                \fontsize{8pt}{8pt}\selectfont{N/A} & 
                \fontsize{8pt}{8pt}\selectfont{50889}\\

        \arrayrulecolor{gray}\midrule
            \rowcolor{lightgray}
                \fontsize{8pt}{8pt}\selectfont{ImageNet-A} &
                \fontsize{8pt}{8pt}\selectfont{200} & 
                \fontsize{8pt}{8pt}\selectfont{N/A} & 
                \fontsize{8pt}{8pt}\selectfont{7500}\\

        \arrayrulecolor{gray}\midrule
            \rowcolor{lightgray}
                \fontsize{8pt}{8pt}\selectfont{ImageNet-R} &
                \fontsize{8pt}{8pt}\selectfont{200} & 
                \fontsize{8pt}{8pt}\selectfont{N/A} & 
                \fontsize{8pt}{8pt}\selectfont{30000}\\
                
        \bottomrule
    \end{tabular}
\end{table}
\subsection{Optimizer Impact on CoOp, CoCoOp, MaPLe, and CoPrompt}
\label{full_camparison}
Table~\ref{tab:full_camparison} provides a comprehensive comparison of different prompt learning methods enhanced with various optimizers, including SAM, FSAM, SAGM, and the proposed SAMPLe, evaluated across all 11 datasets.

\subsection{Datasets Details}
\label{ssec:datatsets_Appndx}

We describe the $11$ datasets and $4$ variations of ImageNet used for evaluation, providing details on the number of classes, along with the training and testing sample sizes, as summarized in Table.~\ref{tab:datasets_stat}.

\subsection{Robustness to Perturbation Radius}
Unlike SAM and SAGM, which maximize the worst-case perturbation per mini-batch, SAMPLe and F-SAM mitigate instability by constraining perturbations to align with the mini-batch gradient rather than strictly enforcing adversarial maximization. While F-SAM explicitly computes a batch-specific gradient perturbation, SAMPLe encourages alignment with the mini-batch gradient direction without rigidly restricting updates. This distinction allows SAMPLe to balance batch-specific adaptation and generalization, enhancing stability across perturbation scales while achieving superior accuracy without excessive sensitivity to $\rho$ in Fig.~\ref{fig:comparison}.

    

\subsection{Robustness to Whole-Batch Gradient Approximation}
Figure~\ref{fig:comparison} demonstrates that SAMPLe is significantly more robust to variations in $\lambda$, which controls the effect of previous gradients in approximating the full-batch gradient. Unlike F-SAM, which strictly enforces the batch-specific gradient in its perturbation update, SAMPLe only encourages alignment, allowing for greater adaptability. This flexibility enables SAMPLe to dynamically compensate for changes in $\lambda$ based on the optimization state, leading to consistently higher accuracy and lower variance.

\begin{figure}[t]
    \centering
    \begin{minipage}{0.25\textwidth}
        \centering
        \includegraphics[width=\linewidth]{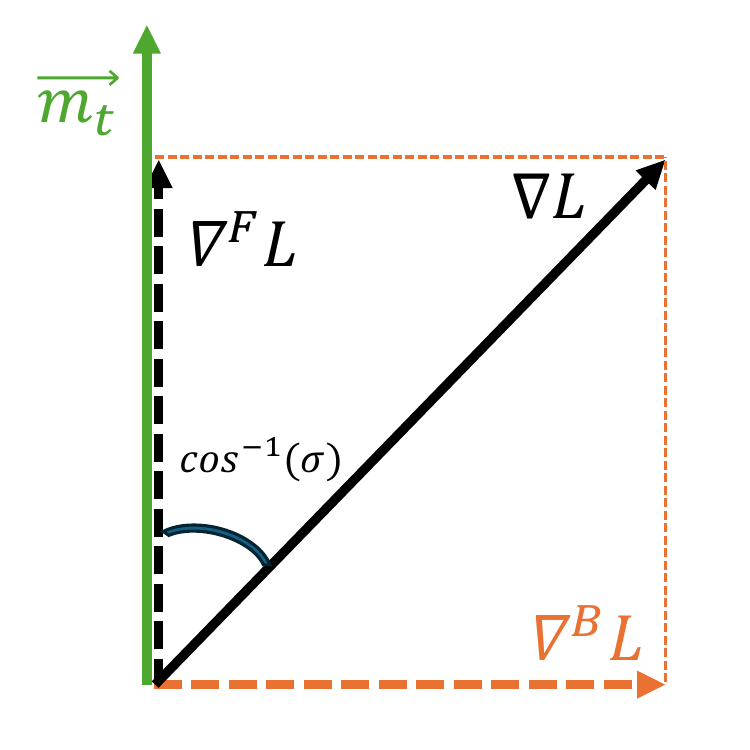}
    \end{minipage}

    \caption{Gradient decomposition .}
    \label{fig:projection}
\end{figure}

\subsection{Deployment Details}
\label{deployment}
\subsubsection{Hyperparameter selection}

\begin{table}[h]
    \centering
    \captionsetup{justification=justified}
    \caption{Hyperparameter settings for SAM, SAGM, and SAMPLe applied to CoOp, CoCoOp, MaPLe, Co-Prompt, and TCP during training on all 11 datasets (in Table.~\ref{tab:full_camparison}).}
    \label{tab:Base2New_hyperParam}
    \renewcommand{\arraystretch}{0.95}
    \setlength{\arrayrulewidth}{0.2mm}
    \definecolor{lightgray}{gray}{0.9}
    \begin{tabular}{
        >{\arraybackslash}p{2.10cm}|
        >{\centering\arraybackslash}p{0.95cm}|
        >{\centering\arraybackslash}p{0.75cm}
        >{\centering\arraybackslash}p{0.95cm}|
        >{\centering\arraybackslash}p{0.85cm}
        >{\centering\arraybackslash}p{0.75cm}|
        >{\centering\arraybackslash}p{0.75cm}
        >{\centering\arraybackslash}p{0.75cm}
        >{\centering\arraybackslash}p{0.75cm}
    }
        \toprule

    {\fontsize{6}{7}\selectfont {\rule{0pt}{0ex}}}&
    {\fontsize{8}{7}\selectfont { \rule{-5pt}{3ex}\textbf{SAM}}}&
    \multicolumn{2}{c|}{{\fontsize{8}{7}\selectfont { \rule{0pt}{0ex}\textbf{SAGM}}}}&
    \multicolumn{2}{c|}{{\fontsize{8}{7}\selectfont { \rule{0pt}{0ex}\textbf{F-SAM}}}}&
    \multicolumn{3}{c}{{\fontsize{8}{7}\selectfont { \rule{-5pt}{3ex}\textbf{SAMPLe}}}}\\ [1.5ex]
    \cline{2-2}
    \cline{3-4}
    \cline{5-6}
    \cline{6-9}
    {\fontsize{6}{7}\selectfont {\rule{0pt}{0ex}}}&
    {\fontsize{8}{7}\selectfont { \rule{-5pt}{3ex}$\mathbf{\mathcal \rho}$}}&
    {\fontsize{8}{7}\selectfont { \rule{-5pt}{3ex}$\mathbf{\mathcal \rho}$}}&
    {\fontsize{8}{7}\selectfont { \rule{-5pt}{3ex}$\mathbf{\mathcal \alpha}$}}&
    {\fontsize{8}{7}\selectfont { \rule{-5pt}{3ex}$\mathbf{\mathcal \rho}$}}&
    {\fontsize{8}{7}\selectfont { \rule{-5pt}{3ex}$\mathbf{\mathcal \lambda}$}}&
    {\fontsize{8}{7}\selectfont { \rule{-5pt}{3ex}$\mathbf{\mathcal \rho}$}}&
    {\fontsize{8}{7}\selectfont { \rule{-5pt}{3ex}$\mathbf{\mathcal \alpha}$}}&
    {\fontsize{8}{7}\selectfont { \rule{-5pt}{3ex}$\mathbf{\mathcal \lambda}$}}\\
        \midrule
        
                \fontsize{7pt}{7pt}\selectfont{CoOp} &
                \fontsize{7pt}{7pt}\selectfont{0.05} &
                \fontsize{7pt}{7pt}\selectfont{0.05} & 
                \fontsize{7pt}{7pt}\selectfont{0.0010} &
                \fontsize{7pt}{7pt}\selectfont{0.05} &
                \fontsize{7pt}{7pt}\selectfont{0.15} &
                \fontsize{7pt}{7pt}\selectfont{0.05} &
                \fontsize{7pt}{7pt}\selectfont{0.0015} &
                \fontsize{7pt}{7pt}\selectfont{0.15}\\
                \fontsize{7pt}{7pt}\selectfont{CoCoOp} &
                \fontsize{7pt}{7pt}\selectfont{0.10} &
                \fontsize{7pt}{7pt}\selectfont{0.10} & 
                \fontsize{7pt}{7pt}\selectfont{0.0010} &
                \fontsize{7pt}{7pt}\selectfont{0.05} &
                \fontsize{7pt}{7pt}\selectfont{0.15} &
                \fontsize{7pt}{7pt}\selectfont{0.10} &
                \fontsize{7pt}{7pt}\selectfont{0.0015} &
                \fontsize{7pt}{7pt}\selectfont{0.15}\\
                \fontsize{7pt}{7pt}\selectfont{MaPLe} &
                \fontsize{7pt}{7pt}\selectfont{0.05} &
                \fontsize{7pt}{7pt}\selectfont{0.10} & 
                \fontsize{7pt}{7pt}\selectfont{0.0010} &
                \fontsize{7pt}{7pt}\selectfont{0.05} &
                \fontsize{7pt}{7pt}\selectfont{0.15} &
                \fontsize{7pt}{7pt}\selectfont{0.10} &
                \fontsize{7pt}{7pt}\selectfont{0.0015} &
                \fontsize{7pt}{7pt}\selectfont{0.15}\\
                \fontsize{7pt}{7pt}\selectfont{CoPrompt} &
                \fontsize{7pt}{7pt}\selectfont{0.05} &
                \fontsize{7pt}{7pt}\selectfont{0.05} & 
                \fontsize{7pt}{7pt}\selectfont{0.0010} &
                \fontsize{7pt}{7pt}\selectfont{0.05} &
                \fontsize{7pt}{7pt}\selectfont{0.15} &
                \fontsize{7pt}{7pt}\selectfont{0.05} &
                \fontsize{7pt}{7pt}\selectfont{0.0015} &
                \fontsize{7pt}{7pt}\selectfont{0.15}\\
                \fontsize{7pt}{7pt}\selectfont{TCP} &
                \fontsize{7pt}{7pt}\selectfont{0.05} &
                \fontsize{7pt}{7pt}\selectfont{0.10} & 
                \fontsize{7pt}{7pt}\selectfont{0.0010} &
                \fontsize{7pt}{7pt}\selectfont{0.05} &
                \fontsize{7pt}{7pt}\selectfont{0.15} &
                \fontsize{7pt}{7pt}\selectfont{0.10} &
                \fontsize{7pt}{7pt}\selectfont{0.0015} &
                \fontsize{7pt}{7pt}\selectfont{0.15}\\
                
        \bottomrule
    \end{tabular}
\end{table}

Table~\ref{tab:Base2New_hyperParam} presents the tuned hyperparameters for SAM, SAGM, and SAMPLe applied to CoOp, CoCoOp, MaPLe, Co-Prompt, and TCP across 11 datasets, focusing on the perturbation radius, $\rho$, alignment parameter, $\alpha$, and previous gradient effective factor, $\lambda$. 
The results in Table~\ref{tab:full_camparison} are well aligned with the hyperparameter choices in Table~\ref{tab:Base2New_hyperParam}. In particular, SAMPLe preserves the same perturbation radius used by the sharpness-aware baselines (\(\rho=0.05\) or \(0.10\), depending on the prompt learner), retains the regularization strength of F-SAM (\(\lambda=0.15\)), and adopts a slightly stronger correction factor than SAGM (\(\alpha=0.0015\) vs.\ \(0.0010\)). This combination yields a more favorable balance between optimization stability and generalization, which is reflected in the results: across all five prompt learning methods, SAMPLe achieves the best average harmonic mean and, in most cases, the strongest new-class accuracy, while maintaining competitive or improved base-class performance. The gains are especially pronounced on datasets with larger base-to-new generalization gaps, such as Flowers102, DTD, EuroSAT, StanfordCars, UCF101, and FGVCAircraft, where SAMPLe consistently improves the balance between memorization of base classes and transfer to unseen categories. Importantly, these improvements are obtained with fixed hyperparameter settings for each method across all 11 datasets, indicating that the performance gains are not due to dataset-specific tuning, but rather to the optimizer itself. Overall, the evidence suggests that SAMPLe more effectively controls loss sharpness and reduces over-specialization, leading to stronger and more consistent base-to-new generalization than SAM, F-SAM, and SAGM.

\subsubsection{MaPLe}
We train MaPLe using a staged training strategy to improve generalization and stability. The training begins with optimizing the learnable prompts and the projection layers using standard optimization methods such as SGD or Adam. Once the initial training phase converges, we freeze the projection layers and continue training only the learnable prompts using SAMPLe, which is designed for prompt learning. This step ensures that the prompts adapt to both flatter minima and low values of loss, which enhances generalization while preserving the alignment between vision and text representations. By decoupling the optimization of prompts and projections, our method allows each component to specialize without disrupting the overall model convergence. The results in Table.~\ref{tab:full_camparison} demonstrates that the SAMPLe optimizer performs well in staged strategy, leading to significantly better performance than the benchmark and other optimization methods, including SAM, F-SAM, and SAGM. Specifically, our approach achieves higher accuracy across the HM value, 

\subsubsection{Co-Prompt}
Following the staged training strategy introduced in MaPLe, we apply SAMPLe to CoPrompt by first training the model using standard optimization and jointly updating the learnable prompts, projection layers, and adapter networks. After convergence, we freeze the adapters and projection layers and then optimize only the learnable prompts using SAMPLe. This allows them to refine their representations while preserving the alignment learned in earlier stages. This staged optimization ensures that the learned prompts remain discriminative and generalize well across unseen class distributions.
\begin{table}[t]
    \vspace{-0cm}
    \centering
    \captionsetup{justification=justified}
    \caption{term-wise ablation study of objective function (Eq.~\eqref{eq:aligned_sam_objective}) average performance over all 11 datasets using CoPrompt method.}
    \label{term wise ablation}
    \renewcommand{\arraystretch}{0.9}
    \setlength{\tabcolsep}{3pt}
    \scriptsize   
    \begin{tabular}{lcc|c}
        \toprule
        \quad\quad\quad\quad\quad\quad\quad\quad\quad\quad\quad\quad & \textbf{Base}\quad & \textbf{New}\quad & \textbf{HM} \\
        \midrule
        $L$ & 84.00 & 77.23 & 80.48 \\
        $L_p$ & {84.61} & 77.98 & 80.99 \\
        $L + L_p - \alpha(\nabla L \cdot \nabla L_p)$ & 85.12 & 78.47 & 81.49 \\
        $L + L_p - \alpha(\nabla L \cdot \nabla L_p) + \alpha \xi \sigma(\nabla L_p \cdot \nabla^{\mathcal{F}} L)$ & 85.62 & 79.03 & 82.02 \\
        \bottomrule
    \end{tabular}
\end{table}

\section{Term-wise ablation study}
To assess the effectiveness of the objective function presented in Eq.~\ref{eq:aligned_sam_objective}, we perform an ablation study against each term. In particular, we compare ERM, $L(\theta, \mathcal{D})$, SAM, ${L}_{p}(\theta; \mathcal{D})$,  SAGM, ${L}(\theta; \mathcal{D}) + {L}_{p}(\theta; \mathcal{D}) - 
\alpha \big(\nabla_{} {L}_{p}(\theta; \mathcal{D}) \cdot \nabla_{} {L}_{}(\theta; \mathcal{D})\big)$ , and SAMPLe, that consists of all terms. All items outperform ERM in the HM column, underscoring the effectiveness of SAM-based methods for prompt learning. Moreover, the exploration term in SAMPLe consistently improves performance across Base, New, and HM, highlighting the importance of balanced learning that accounts for both exploitation and exploration in prompt learning. 

\end{document}